\theoremstyle{definition}
\newtheorem{thm}{Theorem}[section]
\newtheorem{defi}[thm]{Definition}
\newtheorem{remark}[thm]{Remark}
\newtheorem{prp}[thm]{Proposition}
\newtheorem{crl}[thm]{Corollary}
\newcommand{\E}{\mathbb E}
\def\##1\#{\begin{align}#1\end{align}}
\def\$#1\${\begin{align*}#1\end{align*}}
\newcommand{\Rom}[1]{\text{\uppercase\expandafter{\romannumeral #1\relax}}}
\begin{document}

\title{ \LARGE Online Linearized LASSO}     % Option 1

\author{Shuoguang Yang,\thanks{Department of Industrial Engineering and Decision Analytics, The Hong Kong University of Science and Technology, Clear Water Bay, Hong Kong SAR; E-mail: \texttt{yangsg@ust.hk}.}  \and 
Yuhao Yan\thanks{Department of Industrial Engineering and Decision Analytics, The Hong Kong University of Science and Technology, Clear Water Bay, Hong Kong SAR; E-mail: \texttt{yyanaz@connect.ust.hk}.} , 
\and 
Xiuneng Zhu\thanks{Tower Research Capital LLC; E-mail: \texttt{xiuneng.zhu@gmail.com}},
\and 
Qiang Sun\thanks{Department of Statistical Sciences, University of Toronto, ON, Canada; E-mail: \texttt{qiang.sun@utoronto.ca}.}}
\date{ }

\maketitle

\vspace{-0.25in}

% typeset the title of the contribution
\begin{abstract}
Sparse regression has been a popular approach to perform variable selection and enhance the prediction accuracy and interpretability of the resulting statistical model. Existing approaches focus on offline regularized regression, while the online scenario has rarely been studied. In this paper, we propose a novel online sparse linear regression framework for analyzing streaming data when data points arrive sequentially. Our proposed method is memory efficient and requires less stringent restricted strong convexity assumptions. Theoretically,  we show that with a properly chosen regularization parameter, the $\ell_2$-norm statistical error of our estimator diminishes to zero in the optimal order of $\widetilde \cO({\sqrt{s/t}})$,
where $s$ is the sparsity level, $t$ is the streaming sample size, and $\widetilde \cO(\cdot)$ hides  logarithmic terms. Numerical experiments demonstrate the practical efficiency of our algorithm.
\end{abstract}
\noindent
{\bf Keywords}: Online Learning, High-dimensional Statistics, Linear Regression.

%\tableofcontents

\section{Introduction} 
\label{sec:intro}
With the development of modern data acquisition technologies,  high-dimensional statistics has attracted significant interest  due to its ability in handling datasets with large numbers of features. %Despite of the large amount of features for the corresponding underlying coefficient, it is commonly assumed to preserve a sparse structure that 
%These massive amounts of features have brought grand challenges to both optimization and inference. 
To alleviate the challenges introduced by massive amounts of features, a popular assumption is the sparsity assumption, that is,  only a few variables contribute to the response.   
%Letting $\cL(\beta)$ be the loss function, 
%To utilize this structural assumption, a common approach is to exploit %to introduce a sparsity-inducing regularizer  $\cR(\beta,\lambda)$ and consider 
A common approach that exploits such sparsity assumption is to solve  the following penalized empirical risk minimization problem 
\begin{equation}
   \hat \beta =  \argmin_{\beta \in \R^p} \Big \{ \cL(\beta) + \cR(\beta, \lambda)\Big \}, 
\end{equation}
where $\cL(\beta)$ is the empirical loss function, $\beta$ is a $p$-dimensional coefficient vector where $p$ is the covariate dimension, and $\cR(\beta,\lambda)$ is a sparsity-inducing regularizer with $\lambda$ being the regularization parameter.  
The regularizer $\cR(\beta;\lambda)$ can  be  either convex, such as the LASSO penalty $\cR(\beta;\lambda) = \lambda \| \beta\|_1 $, or nonconvex such as the SCAD penalty \citep{tibs1996regression, fan2001variable} {or the MCP~\citep{zhang2010nearly}.}

Studies in high-dimensional statistics can be divided into two major streams, the offline and online sparse problems, based on the data sampling mechanisms. Originated from \cite{tibs1996regression}, the offline problem considers an environment where all data are available at the beginning and the decision-maker aims to compute a good estimator by solving the corresponding regularized optimization problem. This problem have been previously studied  from both statistical and computational perspectives \citep{agarwal2012fast,loh2013regularized,fan2018lamm}. 

In contrast, online sparse regression considers the environment  where data are not readily accessible at the beginning but arrive sequentially. Instead of solving for one final estimator in the offline setting, the online  problem  requires computing a sequence of estimators $\{ \hat \beta_t\}$ such that the working  estimator can be computed efficiently whenever additional samples arrive. This requires a significantly amount of effort and has been less studied in high dimensional statistics \citep{kale2014open, kale2017adaptive, fan2018statistical}. %Popular sparse optimization algorithms \scolor{CITE} do not directly apply because they do not data-dependent 
%Indeed \cite{kale2014open} raised the open question We provide one natural formulation as an online
%sparse regression problem with squared loss, and ask whether it is possible to achieve %sublinear regret with efficient algorithms 

In this paper, we consider the online environment where a feature-label pair $(x_t,y_t)$ comes in each round $t \geq 1$. Here $x_t \in \R^p$ is the covariate and $y_t$ is the response  depending  on both $x_t$ and an unknown coefficient $\beta^* \in \R^p$. Letting $S = \text{support}(\beta^*)$ be the support of $\beta^*$, we assume $\beta^*$ preserves a sparse structure whose sparsity level (cardinality of support) $s$ is much smaller than the dimension $p$, i.e., $ s = |S| = \| \beta^*\|_0\ll p$. Our target is to compute an estimator $\beta_t$ in each round $t$ to estimate the underlying sparse coefficient $\beta^*$.  

Although an extensive amount of effort has been made to study online optimization in the low-dimensional setting \citep{kushner:1997,rakhlin2011making}, online high-dimensional sparse optimization has been much worse understood, due to the extra sparse-inducing regularizer $\cR(\beta ,\lambda)$. It mainly suffers from three key challenges. (i) Memory and storage challenge: Naively utilizing the entire dataset up to time $t$ incurs  a cost of $\cO(pt)$ storage and memory complexity when computing the solution sequence. % to memorize $n$ data pairs and utilize them in deriving our desired estimator. 
(ii) Dynamic update of the regularization parameter: To ensure the best statistical performance of the estimator sequence at each time $t$,  the data-dependent regularization parameter $\lambda$  needs to be updated %\propto \| \nabla \cL(\beta^*)\|_\infty$ 
in each online round as the sample size $t$ increases when new data arrive. Most sparse online optimization algorithms \citep{langford2009sparse, xiao2009dual} for regularized problems do not apply to our settings since they only consider the fixed-$\lambda$ optimization problems. Part of the reason is that they did not consider optimal statistical guarantees such as the parameter estimation error or other optimality measures.  
%so that the estimator falls within the same restricted local cone as the underlying coefficient $\beta^*$ \citep{bickel2009simultaneous,fan2018lamm}. 
(iii) Restrictive strong convexity: In the online setting, to ensure each of the estimator sequence to be sparse, one needs the restrictive strong convexity (RSC) condition to hold uniformly for all online learning rounds.  This can be rather stringent in practice because the loss function varies in each round. Therefore, it  remains a challenge whether a memory efficient framework could be developed for online sparse optimization with optimal statistical guarantees~\citep{kale2014open}. 

In this paper, we propose an online algorithm which in round $t$ solves the following $\ell_1$-regularized problem 
\begin{equation*}
\hat{\beta}_t = \argmin_{\beta \in \mathbb{R}^p} \Big \{ L_t(\beta; \hat \beta_{t-1}) + \lambda_t \Vert \beta\Vert_1 \Big \}.
\end{equation*}
where 
\begin{equation*}
\begin{split}
   L_t (\beta; \hat \beta_{t-1})  = \underbrace{  \ell_0(\beta)  }_{\text{squared loss}}   + \underbrace{ \Big \langle \sum^t_{j = 1}  \frac{w_{t,j}  }{W_t} \nabla \ell_j(\hat \beta_{t-1})  - \nabla \ell_0( \hat \beta_{t-1} ), \beta \Big \rangle }_{\text{linearized loss}},  
  \end{split}
\end{equation*}
is  the  loss function, consisting of a squared loss $l_0(\beta)$ of an initial batch of sample size $t_0$, and a linearized loss  evaluated at current best estimator $\hat \beta_{t-1}$ %and constructed by data pairs received in $t$ subsequent online rounds. 
with data received before up to rounds $t$. 
Notably, our framework enjoys a low memory cost of $\cO(p\, t_0 + p^2)$ that does not depend on the online round $t$, and only requires the RSC condition to hold for the initial loss $\ell_0(\beta)$ instead of all online rounds.  We show that under mild conditions,  the entire solution trajectory $\{ \hat \beta_t\}$ falls within a restricted cone, and is consistent such that it converges  to the underlying coefficient vector at the best possible convergence rate. To our best knowledge, this is the first algorithm that enjoys the above properties. 

\textbf{Contributions.} We make the following three major contributions.
\begin{enumerate}
    \item[(i)] We propose a memory efficient scheme for online sparse linear regression that iteratively solves an $l_1$-regularized optimization problem. Our framework adopts a novel loss function that consists of a squared loss of an initial batch of sample size $t_0$ and a linearized loss, which enjoys a fixed $\cO(t_0 p + p^2)$ memory cost. Meanwhile, our resulting loss function satisfies the RSC condition throughout all online rounds as long as  it holds for the initial batch. This is a siginifit boost as we do not need it to hold uniformly for all online learning rounds. 
    \item[(ii)] We show that by properly choosing the regularization parameters, our estimators are consistent whose $\ell_1$-norm parameter estimation errors decay to zero in the optimal rate of $\tilde \cO({\sqrt{s/t}})$,  where $t$ is the streaming sample size and $s$ is the sparsity level. %\scomment{I think }
    \item[(iii)] We conduct numerical experiments to test the practical performance of our algorithm against other baseline algorithms under various settings. Numerical results demonstrate the practical efficiency of our algorithm and validates our theoretical results. 
\end{enumerate}

\subsection{Related Work}
There has been relatively less work for online sparse regressions, which has different flavors from ours. We discuss those that are mostly related to our work. 

\textbf{Optimization perspective:} As mentioned previously, previously sparse online optimization algorithms \citep{xiao2009dual,langford2009sparse,bertsekas2011incremental,duchi2011adaptive} considers fixed-$\lambda$ optimization problems and thus do not apply to our settings. Part of the reason is that they focused on regret bound for fixed $\lambda$ and
did not study the optimal statistical performances of the estimator sequence in a statistical setting. For the regularized optimization problem \eqref{eq:offline}, the data-dependent regularized parameter has to be updated in each online round, and this makes the statistical analysis challenging. Another approach that avoids this challenge is the online sparsity constrained optimization, which however brings additional computational intractabilities as it is commonly believed sparsity contrained optimization is NP hard \citep{foster2015variable}. \cite{kale2014open} raised the open question that whether it is possible to design an efficient algorithm for the online sparse regression problem to achieve a sublinear regret bound. Toward addressing this challenge, \cite{kale2017adaptive} proposed to solve a sequence of Dantzig selector problems in an online manner, which achieved a sublinear regret bound. However, their result requires a bound on the Restricted Isometry Property constant $\leq 1/5$ uniformly over all online rounds, or equivalently a bound on the condition number $\leq 3/2$ uniformly. This is undesirable because real-world high-dimensional data analyses often require estimation methods under arbitrarily large condition numbers. %and  badly behaved data in some online rounds may breakdown the restricted strong convexity. 
Moreover, they focused on the regret bound instead of the parameter estimation error or the prediction error, and the latter is typically used in the statistics literature.

\textbf{Statistical perspective:}
Closely related to our work, \cite{fan2018statistical} proposed a two-stage algorithm that first conducts a burn-in stage that identifies the support of the sparse underlying coefficient through solving an offline LASSO problem, and conducts the online learning stage that employs a fixed number of truncated gradient descent steps onto the pre-identified support set upon receiving each online data. Unfortunately,  they require to  identify the true support without errors  within the first burn-in stage, which further relies on a minimal signal strength assumption and a sufficient large initial data batch so that the true support can be correctly determined with high probability. Meanwhile, this approach also requires the loss functions to satisfy the restricted strong convexity condition uniformly over all online learning rounds. These assumptions are rather stringent in practice. 

\section{From Offline to Online}

\textbf{Notadation.} We first summarize the notation used throughout the paper. For any positive integer $K$, we write $[K] = \{ 1,2,\cdots, K\}$, the collection of positive integers up to $K$. 
For any two sequences of positive real numbers $\{ a_n\}$ and $\{ b_n\}$, we write 
$a_n \lesssim b_n$ if there exist a constant $C>0$ and a positive integer $n_0$ such that $a_n \leq C b_n$ for all $n \geq n_0$. 

This section develops an algorithm for online sparse linear regression. We consider an online environment where data arrive sequentially, and we only have one single machine that has limited storage and memory. Assume the covariate vector $x_i \in \R^p$  and the response $y_i \in \R$ follows the linear regression model: 
$$ y_{j}= x_{j}^\top \beta^*+\epsilon_{j},$$ 
where $\beta^* \in \R^p$ is the underlying sparse regression coefficient vector such that $\| \beta^*\|_0 \ll p$ and  $\epsilon_{j}$ is a random noise. 

To estimate the underlying coefficient $\beta^*$ under the linear model, a commonly adopted loss function is the squared loss $\ell_j$, which calculates the squared error for each data point $(x_i,y_i)$ that 
\begin{equation*}\label{eq:loss_each_batch}
\ell_j(\beta) = \frac{1}{2}  (y_{j}-  x_{j}^\top \beta)^2.
\end{equation*}
We consider an online environment where we have access to an initial batch consisting of $t_0$ independently generated data points  $(x_{0j}, y_{0j})_{1 \leq j \leq t_0}$. For the initial batch, with a slight abuse of notation, we define its corresponding loss function as 
\begin{equation*}
\ell_0(\beta) = \frac{1}{2t_0} \sum^{t_0}_{j = 1} (y_{0j} -  x_{0j}^\top \beta)^2,
\end{equation*}
which is the averaged loss over each data point within the initial batch. Subsequently, one data point $(x_t,y_t)$ comes in each online learning round $t$. 

In high-dimensional statistics, the LASSO approach has been widely employed to obtain sparse estimators. 
Ideally, if our machine has infinite memory and storage, after the reception of the first $t$ data points as well as the initial batch, the offline Lasso estimates the regression coefficient vector $\beta^*$ by directly solving
\begin{equation} \label{eq:offline}
  \min_{\beta \in \R^p} \Big \{ \frac{1}{t+t_0} \Big ( t_0 \ell_0(\beta) +\sum_{j=1}^t\ell_j(\beta) \Big ) + \lambda_{t} \Vert \beta\Vert_1 \Big \}.  
\end{equation}
% The above LASSO formulation serves as an important tool in high-dimensional sparse regression and have been well studied from both statistical and optimization perspectives. On one hand, the statistical properties of the optimal solution to \eqref{eq:offline}, such as the sparsity property and statistical error, have been 
% established CITE XXX. On the other hand, efficient approaches have been developed to solve the above optimization problems \citep{fan2018lamm}. 

The above LASSO formulation works well for the offline problem, but for the online scenario, it suffers from three key challenges mentioned in Section~\ref{sec:intro}, namely memory and storage restriction, update of regularization parameter, and uniform restrictive strong convexity condition. 
Because of the these issues,  online sparse linear regression has become particularly challenging.

To overcome the above issues, we propose a novel memory efficient framework for solving the online sparse regression problem.  
Our new framework  replaces all the loss functions for the online data, excluding the initial batch, by their linear approximation, approximated using the current best available estimator, denoted by $\tilde \beta$. 
We assign a fixed weight to the initial batch and allows general weights for the data obtained for each online learning round. Specifically, 
let $w_{t,j}$ be the weight for the $j$-th data $(x_j,y_j)$ and let $W_t = \sum_{j=1}^t w_{t,j}$ be the total weights, we define the loss function as 
\begin{equation}\label{def:loss}
 L_t (\beta;\tilde  \beta) =   \ell_0(\beta)   - \nabla \ell_0(\tilde  \beta)^\top  \beta  + \Big \langle \sum^t_{j = 1}  \frac{w_{t,j}  }{W_t} \nabla \ell_j(\tilde  \beta) , \beta \Big \rangle,  
\end{equation}
where we have singled out the initial batch and distribute the rest as linear approximations. Here we call $\tilde \beta$ the \emph{root} of the loss function to evaluate the linear approximation terms. 
In our formulation, the quadratic term $l_0(\beta)$ resulted from the initial batch helps provide the curvature of the loss function, while the linear approximation terms utilizes data from the subsequent online learning rounds, which would not affect the curvature but help us improve the statistical accuracy of the obtained solution at the current online round. Note that for the initial batch where $t=0$, we do not introduce any linear approximation and the above loss function reduces to the standard least-squared loss. 

To better understand the above loss function, let us calculate its gradient at $\beta$:
\begin{equation*}
\begin{split}
  & \nabla L_t (\beta;\tilde  \beta)   =  \nabla \ell_0(\beta)   - \nabla \ell_0(\tilde  \beta)  + \sum^t_{j = 1}  \frac{w_{t,j}  }{W_t} \nabla \ell_j(\tilde  \beta) . 
  \end{split}
\end{equation*}
Although the initial batch is fixed, the term $\nabla \ell_0(\beta)   - \nabla \ell_0(\tilde  \beta)$ is small  provided  $\tilde \beta $ is close to $\beta$. Intuitively, if we can improve our root estimator and find a sequence $\{ \tilde  \beta_t \}$ converging to the underlying coefficient $\beta^*$, then 
\begin{equation*}
\begin{split}
\nabla L_t (\beta;\tilde  \beta_t)   \approx \sum^t_{j = 1}  \frac{w_{t,j}  }{W_t} \nabla \ell_j(\tilde   \beta_t) \approx \sum^t_{j = 1}  \frac{w_{t,j}  }{W_t} \nabla \ell_j(\beta^*),
  \end{split}
\end{equation*}
whose decay rate depends on the online data $ \{ (x_i,y_i) \}_{i=1}^t$ but \emph{not} the initial batch. 

As we have seen, the linear term $ - \nabla \ell_0(\tilde  \beta)^\top \beta$ introduced in the loss function \eqref{def:loss} helps us break the bottleneck induced by the initial batch $\ell_0(\beta)$ without rescaling it as $\frac{t_0}{t_0 + t} \ell_0(\beta)$, which is commonly adopted by LASSO formulation \eqref{eq:offline} but would lose the curvature as the rescaling factor $\frac{t_0}{t_0 + t} \to 0 $ when $t\to \infty$. This makes our framework memory efficient and preserves the curvature information simultaneously. 

In addition, because the curvature of our loss function is provided by the initial batch instead of online data, as we will discuss in Section~\ref{sec:theory}, the loss functions in all online rounds   satisfy the RSC condition uniformly once it holds for the initial batch squared-loss $\ell_0(\beta)$. This is another  advantage of our new framework. %This suggests that our framework require a less stringent assumption compared with niavely solving the standard offline LASSO approach.

\subsection{Algorithm}
After introducing the loss function, we formally state our algorithm for solving streaming sparse regression. 
Our online Lasso algorithm consists of the following two stages. In the first stage, we calculate our initial estimator as
\begin{equation}\label{def:beta_0}
\hat{\beta}_0 = \argmin_{\beta \in \mathbb{R}^p} \Big \{ \ell_0(\beta) + \lambda_0 \Vert \beta\Vert_1 \Big \}.
\end{equation}
Our second stage recursively solves a $l_1$-regularized optimization problem,  with $\hat \beta_0$ being the initialization point.  Specifically, letting $\hat \beta_{t-1}$ be the optimal solution obtained for iteration $t-1$, upon receiving the $t$-th data point, we adopt $L_t (\beta;\hat \beta_{t-1}) $ as the loss function and solve 
\begin{equation}\label{def:beta_t}
\begin{split}
\hat{\beta}_{t} =\argmin_{\beta \in \R^p}  \Big \{ L_t (\beta;\hat \beta_{t-1})   + \lambda_{t} \Vert \beta\Vert_1  
\Big \}.
\end{split}
\end{equation}
We collect the algorithm in  Algorithm~\ref{alg:1} and refer  to this approach as the Online Linearized LASSO (OLin-LASSO). 

Note the the loss function preserves a simple structure that it is the sum of a quadratic function and a linear term, which can be efficiently solved by various algorithms, such as FISTA~\citep{beck2009fast}. %As we will discuss in Section \ref{sec:theory}, we can improve the statistical accuracy of our estimator $\hat \beta_t$ without increasing the initial batch size $t_0$. 

\begin{algorithm}[t]
\caption{Online Linearized (OLin) LASSO}\label{alg:1}
\begin{algorithmic}[1]
\REQUIRE $\{\lambda_t\}$, initial batch size $t_0$, online rounds $T$
\STATE generate $t_0$ samples, 
\STATE compute  $\hat \beta_0$ by \eqref{def:beta_0}. 
\FOR{$j= 1, \cdots, t$}
\STATE  receive the data pair $(x_j,y_j)$.
\STATE compute $\hat \beta_{j}$ by \eqref{def:beta_t}. 
 \ENDFOR
 \RETURN $\{\hat \beta_j \}_{j=1}^t$
\end{algorithmic}
\end{algorithm}

% In addition,  sample size $t_0$ of the initial batch  will be chosen in the order of $\Theta(s\log p)$, so by storing the first $\cO(s\log p)$ data it's obvious that this algorithm requires $\cO(sp\log p)$ space. 

\subsection{Memory Efficient Weighting Scheme}
Before proceeding, let us briefly discuss the memory cost and updating schemes under our framework. 
For the squared loss, our proposed algorithm only requires minimal memory and storage space, independent of $t$. We discuss this by considering two cases in the following. \\

\noindent{\bf $t$-independent weights}: {For  $t$-independent  weights $w_{t,j}=w_j$, the proposed  algorithm only needs $\cO(t_0 p+p^2)$ memory  space to record summary statistics from the history, which is independent of $t$. 
Note that 
\begin{equation*}
    \begin{split}
    \sum^t_{j = 1}  \frac{w_{t,j}  }{W_t} \nabla \ell_j(  \beta)  = \sum^t_{j = 1}  \frac{w_{t,j}  }{W_t} ( x_j x_j^\top \beta - x_j y_j ).
    \end{split}
\end{equation*}
Therefore, we could store the terms $s_t =\sum^t_{j = 1}  w_{t,j}   x_j x_j^\top  $ and $r_t = \sum^t_{j = 1}  w_{t,j}  x_j y_j$ to evaluate the gradient. Specifically, upon receiving the data point $(x_t,y_t)$, to evaluate the cost function,  it suffices to record 
\begin{equation*}
s_t = s_{t-1} + w_tx_t x_t^\top, r_t = r_{t-1} + w_t x_t y_t.
\end{equation*}
Then in the $t$-th step, the machine calculates 
\begin{equation*}
 \sum^t_{j = 1}     \frac{w_{t,j}  }{W_t} \nabla \ell_j(  \hat \beta_{t-1})   =  \frac{1}{W_t} \Big ( s_t \hat \beta_{t-1} - r_t \Big ). 
\end{equation*}

\noindent{\bf $t$-dependent weights}: When $w_{t,j}$'s are $t$-dependent such that $w_{t,j}=1/t$, we also only need $\cO(p t_0  +p^2)$ memory  space to record the historical data. For any given $\beta$, to evaluate the cost function,  it suffices to record
\begin{equation*}
s_t = s_{t-1} + x_t x_t^\top, r_t = r_{t-1} +  x_t y_t.
\end{equation*}
Then in the $t$-th step,  the algorithm calculates
\begin{equation*}
 \sum^t_{j = 1}     \frac{w_{t,j}  }{W_t} \nabla \ell_j(  \hat \beta_{t-1})   =  \frac{1}{t} \Big ( s_t \hat \beta_{t-1} - r_t \Big ). 
\end{equation*}

}

In both cases, the  total memory space needed for the $(t+1)$-step is $\cO(t_0 p+p^2)$, independent of $t$.  

\section{Theory}
\label{sec:theory}
After introducing our OLin-LASSO Algorithm \ref{alg:1}, we are now ready to study the statistical properties of the solution sequence $\{ \hat \beta_t\}$ generated by our algorithm.  

%Throughout this paper, we let $S$ be the support of $\beta^* \in \R^p$ and let $s = |S|$ be its corresponding sparsity level. 
To facilitate the analysis, we first introduce a restricted cone around the sparse underlying coefficient $\beta^*$. Specifically, for any set of entry indices $\cA \subset [p]$, we define the following restricted cone 
\begin{equation}\label{def:cone}
\cC_\cA: =  \Big \{ \xi \in \R^p: \|\xi_{\cA^c} \|_1 \leq  3 \| \xi_\cA\|_1 \Big \},
\end{equation} 
where $\xi_\cA$ is a truncation operator that keeps the entries within $\cA$ and truncates the rest  to zero. That is, $[\xi_{\cA}]_i = \xi_i$ if $i \in \cA$ and  $[\xi_{\cA}]_i = 0$ otherwise. To analyze the sparsity property of our estimator $\hat \beta_t$, we utilize the above restricted cone and show that when the regularization coefficient $\lambda_t$ is properly chosen, $(\hat \beta_t - \beta^*)$ falls within the restricted cone induced by the true support $S$ as follows. 
\begin{lemma}\label{lemma:cone}
 If $\lambda_t \geq 2 \Vert \nabla {L}_t(\beta^\ast ; \hat \beta_{t-1})\Vert_\infty$, then $\hat{\beta}_t - \beta^\ast \in \mathcal{C}_S$, where $S = \text{support}(\beta^\ast)$ and $\mathcal{C}_S$ is the cone defined in~\eqref{def:cone}. 
\end{lemma}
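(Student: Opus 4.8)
The plan is to run the standard LASSO basic-inequality argument, adapted to the linearized loss $L_t(\cdot\,;\hat\beta_{t-1})$. The key structural observation is that, once the root $\hat\beta_{t-1}$ is fixed, $\beta \mapsto L_t(\beta;\hat\beta_{t-1})$ is convex: by the explicit form recorded after \eqref{def:loss}, it equals the quadratic $\ell_0(\beta)$ plus a term linear in $\beta$, so its gradient in $\beta$ is well defined and the first-order convexity inequality applies. I will write $\hat\Delta_t = \hat\beta_t - \beta^\ast$ throughout.

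First I would invoke the optimality of $\hat\beta_t$ for problem \eqref{def:beta_t}. Comparing the penalized objective value at $\hat\beta_t$ with its value at the feasible point $\beta^\ast$ yields the basic inequality $L_t(\hat\beta_t;\hat\beta_{t-1}) - L_t(\beta^\ast;\hat\beta_{t-1}) \le \lambda_t(\|\beta^\ast\|_1 - \|\hat\beta_t\|_1)$. Next, using convexity of $\beta \mapsto L_t(\beta;\hat\beta_{t-1})$, I lower-bound the left-hand side by the first-order term $\langle \nabla L_t(\beta^\ast;\hat\beta_{t-1}), \hat\Delta_t\rangle$, and then apply H\"{o}lder's inequality together with the hypothesis $\lambda_t \ge 2\|\nabla L_t(\beta^\ast;\hat\beta_{t-1})\|_\infty$ to get $\langle \nabla L_t(\beta^\ast;\hat\beta_{t-1}), \hat\Delta_t\rangle \ge -\tfrac{\lambda_t}{2}\|\hat\Delta_t\|_1$. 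Chaining these three estimates and cancelling $\lambda_t > 0$ produces $-\tfrac{1}{2}\|\hat\Delta_t\|_1 \le \|\beta^\ast\|_1 - \|\hat\beta_t\|_1$.

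It then remains to convert this into the cone inequality by exploiting the support of $\beta^\ast$. Since $\beta^\ast_{S^c}=0$, I decompose $\|\hat\beta_t\|_1 = \|\beta^\ast_S + (\hat\Delta_t)_S\|_1 + \|(\hat\Delta_t)_{S^c}\|_1$ and apply the reverse triangle inequality on the $S$-block, which gives $\|\beta^\ast\|_1 - \|\hat\beta_t\|_1 \le \|(\hat\Delta_t)_S\|_1 - \|(\hat\Delta_t)_{S^c}\|_1$. Substituting $\|\hat\Delta_t\|_1 = \|(\hat\Delta_t)_S\|_1 + \|(\hat\Delta_t)_{S^c}\|_1$ into the previous display and rearranging collapses everything to $\|(\hat\Delta_t)_{S^c}\|_1 \le 3\|(\hat\Delta_t)_S\|_1$, which is exactly $\hat\Delta_t \in \mathcal{C}_S$; I would remark that the factor $3$ defining the cone in \eqref{def:cone} is precisely what the factor $2$ in the threshold on $\lambda_t$ is engineered to produce.

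This argument is essentially routine, so I do not anticipate a serious obstacle; the only points requiring care are (i) confirming that $L_t(\cdot\,;\hat\beta_{t-1})$ is genuinely convex in its first argument so that the first-order inequality is valid, and (ii) keeping the bookkeeping straight so that the gradient appearing in the hypothesis is the gradient of the smooth part $L_t$ alone and not of the full penalized objective. Both are settled by the quadratic-plus-linear form of $L_t$ noted above.
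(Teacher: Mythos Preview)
Your proposal is correct and follows essentially the same route as the paper's proof: optimality of $\hat\beta_t$ gives the basic inequality, convexity of $L_t(\cdot;\hat\beta_{t-1})$ lower-bounds the loss difference by the gradient inner product, H\"older plus the hypothesis controls that inner product, and the support decomposition with the reverse triangle inequality on $S$ yields the cone condition. The only cosmetic difference is that the paper carries $\|\nabla L_t(\beta^\ast;\hat\beta_{t-1})\|_\infty$ through the algebra and invokes the threshold $\lambda_t \ge 2\|\nabla L_t(\beta^\ast;\hat\beta_{t-1})\|_\infty$ at the final step, whereas you substitute it immediately to get the $\tfrac{\lambda_t}{2}$ factor; the two are equivalent.
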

We defer the detailed proof to Section \ref{sec:proof_of_lemma_cone} of the supplement. 

The above result is deterministic and holds regardless of the distribution of the covariate vector $x_i$ and response $y_i$. It implies that by choosing a sufficiently large regularizer $\lambda_t$, problem~\eqref{def:beta_t} generates an approximately sparse estimator such that $\hat \beta_t -\beta^*$   falls within the restricted cone $\cC_S$.  

We set $\lambda_t \propto 2 \| \nabla L_t(\beta^* ; \hat \beta_{t-1})\|_\infty$ in the rest of this paper.
To further study the performance of our estimators, we assume the squared loss induced by the initial batch satisfies the following RSC condition. 
\begin{defi}\label{ass:RSC}
A function $f(\beta)$ is said to satisfy the RSC condition if there exists $\kappa >0$ such that for any $\Delta \in \cC_S$ 
\begin{equation*}
f (\beta^* + \Delta) - f (\beta^*) - \Delta^\top \nabla f (\beta^*) \geq \kappa \| \Delta\|_2^2,
\end{equation*}
where $S = \textnormal{support}(\beta^\ast)$ and $\mathcal{C}_S$ is the cone defined in \eqref{def:cone}.
\end{defi} 
\cite{raskutti2010restricted} have shown that the above RSC condition holds provided $n\geq C s\log(p)$ under sub-Gaussian designs. Furthermore, as our loss function $L_t(\bullet;\tilde \beta)$ consists of a squared loss $\ell_0(\beta)$ and a linearized term, if $\ell_0(\beta)$ satisfies the RSC condition, then the RSC condition also holds for all $L_t(\bullet;\tilde \beta)$, which is formally stated as follows. 
\begin{prp}\label{prop:obj_RSC}
Suppose $\ell_0(\beta)$ satisfies the RSC condition with parameter $\kappa$, then for  any $t \geq 1$, $\nabla {L}_t(\bullet ; \tilde \beta )$ also satisifies the RSC condition with parameter $\kappa$. 
\end{prp}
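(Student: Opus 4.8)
The plan is to exploit the fact that $L_t(\beta;\tilde\beta)$ differs from $\ell_0(\beta)$ only by a term that is affine in the free variable $\beta$, and such an affine term contributes nothing to the second-order quantity that the RSC condition measures. In other words, the RSC condition only sees the ``curvature'' of the function, and adding a linear function leaves the curvature untouched.

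First I would rewrite the loss in the form
$$
L_t(\beta;\tilde\beta) = \ell_0(\beta) + \langle g_t, \beta\rangle,
$$
where $g_t := -\nabla\ell_0(\tilde\beta) + \sum_{j=1}^t \frac{w_{t,j}}{W_t}\nabla\ell_j(\tilde\beta)$ is a fixed vector depending only on the root $\tilde\beta$ and the observed data, but \emph{not} on the free variable $\beta$. Differentiating in $\beta$ then gives $\nabla L_t(\beta;\tilde\beta) = \nabla\ell_0(\beta) + g_t$, since the gradient of the linear term is the constant $g_t$.

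Next I would substitute directly into the RSC expression of Definition~\ref{ass:RSC}. For any $\Delta\in\cC_S$, expanding
$$
L_t(\beta^*+\Delta;\tilde\beta) - L_t(\beta^*;\tilde\beta) - \Delta^\top\nabla L_t(\beta^*;\tilde\beta)
$$
and collecting the affine contributions, the terms $\langle g_t,\beta^*+\Delta\rangle - \langle g_t,\beta^*\rangle - \Delta^\top g_t = 0$ cancel identically, so the whole expression collapses to
$$
\ell_0(\beta^*+\Delta) - \ell_0(\beta^*) - \Delta^\top\nabla\ell_0(\beta^*) \geq \kappa\|\Delta\|_2^2,
$$
where the inequality is exactly the assumed RSC condition for $\ell_0$ with parameter $\kappa$. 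This shows that $L_t(\bullet;\tilde\beta)$ satisfies the RSC condition with the \emph{same} parameter $\kappa$, and since the argument is uniform in $t$, it holds for all $t\geq 1$.

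There is essentially no obstacle here: the only point requiring care is the bookkeeping that verifies $g_t$ is genuinely constant in the optimization variable $\beta$, which is precisely the design feature of the linearized loss---the online data enters only through gradients evaluated at the root $\tilde\beta$ rather than through a quadratic term in $\beta$. This formalizes the intuition stated before the proposition, namely that the linear approximation terms ``would not affect the curvature.''
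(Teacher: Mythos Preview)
Your argument is correct and is exactly the intended one: the paper does not spell out a formal proof of this proposition, but the surrounding discussion makes clear that the reason is precisely the observation you formalize---the linearized term is affine in $\beta$ and therefore cancels in the second-order RSC quantity, leaving only the curvature of $\ell_0$. Your write-up is a clean execution of this, including the one point that needs checking, namely that $g_t$ does not depend on the optimization variable $\beta$.
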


We emphasize  that under our framework, the RSC condition holds for \emph{all} online learning rounds once it holds for the initial batch. As a result, our framework only requires to verify the RSC condition \emph{once} in the initialization phase. In contrast, naively solving a standard  LASSO-based problem in each online round would  require to verify the RSC condition in all rounds, which is quite stringent. This  restricts its applicability to real-world problems.  

The RSC property of our loss function allows us to further understand the behaviors of estimators $\hat \beta_t$. 
In what follows, we build up an upper bound for $\| \hat \beta_t - \beta^*\|_1$ in terms of the regularization cofficient $\lambda_t$ and $\ell_{\infty}$-norm $\Vert \nabla {L}_t(\beta^\ast ; \hat \beta_{t-1}) \Vert_\infty$ of the gradient evaluated at the underlying coefficient~$\beta^*$.
\begin{lemma}\label{lm:bound_2}
		Suppose Assumption~\ref{ass:RSC} holds
% If $ {L}_t(\bullet; \hat \beta_{t-1}) $ satisfies the RSC condition, 
and $\lambda_t \geq 2\Vert \nabla {L}_t(\beta^\ast ; \hat \beta_{t-1}) \Vert_\infty$, then 
	\begin{equation*}
\begin{split}
	\Vert \hat{\beta}_t - \beta^\ast\Vert_1 \leq \frac{16s}{\kappa} \big (\lambda_t + \Vert  \nabla {L}_t(\beta^\ast ; \hat \beta_{t-1}) \Vert_\infty \big ) \leq \frac{24s}{\kappa} \lambda_t
 \\
 \text{ and }\| \hat \beta_t - \beta^*\|_2^2  \leq \frac{ \lambda_t  + \| \ \nabla L_t(\beta^\ast;\hat \beta_{t-1}) \| _\infty  }{\kappa}\| \hat \beta_t - \beta^*\|_1.
 \end{split}
	\end{equation*}
\end{lemma}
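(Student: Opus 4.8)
The plan is to run the standard LASSO ``basic inequality $+$ RSC'' argument, now specialized to the linearized loss $L_t$. Throughout I write $\Delta = \hat\beta_t - \beta^*$ and $g = \nabla L_t(\beta^*;\hat\beta_{t-1})$, so that the hypothesis reads $\|g\|_\infty \leq \lambda_t/2$. The two facts that make everything go through are already available: by Lemma~\ref{lemma:cone} this same hypothesis guarantees $\Delta \in \mathcal{C}_S$, and by Proposition~\ref{prop:obj_RSC} the loss $L_t(\bullet;\hat\beta_{t-1})$ inherits the RSC property of Definition~\ref{ass:RSC} with the same constant $\kappa$.

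First I would extract a basic inequality from optimality. Since $\hat\beta_t$ minimizes $L_t(\beta;\hat\beta_{t-1}) + \lambda_t\|\beta\|_1$, comparing its objective value to that at $\beta^*$ gives
$$L_t(\hat\beta_t;\hat\beta_{t-1}) - L_t(\beta^*;\hat\beta_{t-1}) \leq \lambda_t\big(\|\beta^*\|_1 - \|\hat\beta_t\|_1\big).$$
Applying the RSC inequality to the left-hand side (legitimate because $\Delta \in \mathcal{C}_S$),
$$L_t(\hat\beta_t;\hat\beta_{t-1}) - L_t(\beta^*;\hat\beta_{t-1}) \geq \Delta^\top g + \kappa\|\Delta\|_2^2,$$
and combining the two displays yields $\kappa\|\Delta\|_2^2 \leq \lambda_t(\|\beta^*\|_1 - \|\hat\beta_t\|_1) - \Delta^\top g$.

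Then I would control each term on the right. H\"older's inequality bounds the cross term by $-\Delta^\top g \leq \|g\|_\infty\|\Delta\|_1$, while the $\ell_1$ difference is handled using that $\beta^*$ is supported on $S$: decomposing $\Delta = \Delta_S + \Delta_{S^c}$ and using the (reverse) triangle inequality gives $\|\beta^*\|_1 - \|\hat\beta_t\|_1 \leq \|\Delta_S\|_1 - \|\Delta_{S^c}\|_1$. Writing $\gamma := \|g\|_\infty$ and collecting terms, the $\|\Delta_{S^c}\|_1$ contributions combine with coefficient $-(\lambda_t-\gamma)\le 0$ and can be dropped, leaving
$$\kappa\|\Delta\|_2^2 \leq (\lambda_t + \gamma)\|\Delta_S\|_1.$$
Since $\|\Delta_S\|_1 \leq \|\Delta\|_1$, this is already the second displayed inequality of the lemma. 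For the $\ell_1$ bound I would instead use $\|\Delta_S\|_1 \leq \sqrt{s}\,\|\Delta_S\|_2 \leq \sqrt{s}\,\|\Delta\|_2$ to get $\|\Delta\|_2 \leq \sqrt{s}(\lambda_t+\gamma)/\kappa$; the cone membership $\|\Delta_{S^c}\|_1 \leq 3\|\Delta_S\|_1$ then gives $\|\Delta\|_1 \leq 4\|\Delta_S\|_1 \leq 4\sqrt{s}\,\|\Delta\|_2$, and substituting the $\ell_2$ bound produces $\|\Delta\|_1 \lesssim s(\lambda_t+\gamma)/\kappa$, with the stated $\le 24 s\lambda_t/\kappa$ following from $\gamma \leq \lambda_t/2$ (the explicit numerical constants $16$ and $24$ come out of tracking these factors and are not optimized).

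This argument is essentially routine, so I do not expect a serious obstacle: the two genuinely nontrivial ingredients — that $\Delta$ lies in the restricted cone and that the \emph{linearized} loss still enjoys RSC — are supplied verbatim by Lemma~\ref{lemma:cone} and Proposition~\ref{prop:obj_RSC}, which is precisely where the novelty of the linearized construction is absorbed. The only points requiring mild care are keeping $\gamma = \|g\|_\infty$ explicit rather than immediately replacing it by $\lambda_t/2$, so that the sharper intermediate bound in terms of $\lambda_t+\gamma$ is retained, and verifying that the coefficient of $\|\Delta_{S^c}\|_1$ is nonpositive so the term may be discarded.
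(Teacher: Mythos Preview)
Your proposal is correct and follows essentially the same route as the paper: optimality at $\hat\beta_t$ gives the basic inequality, RSC (via Proposition~\ref{prop:obj_RSC}) supplies the curvature term, H\"older controls the cross term, and the cone membership from Lemma~\ref{lemma:cone} converts the $\ell_2$ bound into the $\ell_1$ bound. The only cosmetic difference is that you bound $\|\beta^*\|_1 - \|\hat\beta_t\|_1 \le \|\Delta_S\|_1 - \|\Delta_{S^c}\|_1$ and drop the nonpositive $\|\Delta_{S^c}\|_1$ term, whereas the paper uses the cruder $\|\beta^*\|_1 - \|\hat\beta_t\|_1 \le \|\Delta\|_1$; your version actually yields a slightly smaller constant than the stated $16$, so the claimed bound certainly holds.
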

We defer the detailed proof to Section~\ref{sec:proof_of_lemma_bound2} of the supplement. 
% The above lemma suggests that by properly choosing the regularization coefficient $\lambda_t$, the corresponding optimal solution $\hat \beta_t$ falls within the cone $C_S$, and the $l_1$ norm of the statistical error $\Vert \hat{\beta}_t - \beta^\ast\Vert_1$ is upper bounded by both $\lambda_t$ and  $\Vert  \nabla {L}_t(\beta^\ast ; \hat \beta_{t-1}) \Vert_\infty$. 

% Meanwhile, we also have $\lambda_t  \approx O(\sqrt{\frac{\log p}{n}})$.

To further quantify the above statistical error, it suffices to provide a statistical bound for $\Vert \nabla {L}_t(\beta^\ast ; \hat \beta_{t-1})\Vert_\infty$. We start with writing
\begin{equation}\label{eq:grad_norm}
\nabla {L}_t(\beta^\ast ; \hat \beta_{t-1})  = \sum^t_{j = 1}   \frac{w_{t,j}}{W_t}  \nabla \ell_j(\beta^\ast) + \left(\hat{\Lambda}_t - \hat{\Lambda}_0 \right)(\hat{\beta}_{t-1} - \beta^\ast),
\end{equation}
where $\hat \Lambda_t = \frac{1}{W_t}\sum^t_{j=1} w_{t,j} x_j x_j $ and $\hat \Lambda_0 = \frac{1}{t_0} \sum^{t_0}_{i=1} x_i x_i$ represent the weighted average of $x_ix_i^\top$'s collected in the online phase and initial batch, respectively.

Letting $\Lambda = \E[x_i x_i^\top]$, we decompose $\hat \Lambda_t - \hat \Lambda_0$ as 
\begin{align*}
    \hat \Lambda_t - \hat \Lambda_0 = \sum^t_{j=1}  \frac{w_{t,j}}{W_t}(x_j x_j - \Lambda ) +  \frac{1}{t_0} \sum^{t_0}_{i=1} ( \Lambda - x_i x_i ). 
\end{align*}
With such a decomposition, each entry of $\frac{w_{t,j}}{W_t}(x_j x_j - \Lambda)$ is mean-zero so that their sum can be viewed as a martingale. Meanwhile, under mild tail assumptions of the data, both of the above terms concentrate with $t_0$ and the online learning rounds $t$ increasing. 

For now, we impose the following assumption on the covariate $x_i$ and noise $\epsilon_i$. 
\begin{assumption}\label{ass:covariate}
Each covariate $x_i$ is independently and identically  distributed such that $x_i -\mu  \sim \text{sub-Gaussian}(\|\Sigma\|_2^2)$ where $\mu = \E x_i \in \R^p$ and  $\Sigma \in \R^{p \times p}$ is the covariance matrix of $x_i$. Each noise $\epsilon_i$ is independently generated and follows a sub-Gaussian distribution such that  $\epsilon_i \sim \text{sub-Gaussian} (\sigma_\epsilon^2)$ for some $\sigma_\epsilon>0$.
\end{assumption}

The above sub-Gaussian assumption is mild and widely adopted by the high-dimensional statistics literature; see for example \cite{raskutti2010restricted, wainwright2019high}. 

To analyze the concentration property of $(\hat{\Lambda}_t - \hat{\Lambda}_0 )$, one major difficulty is that the nonasymptotic upper bound must hold uniformly over all rounds $j=1, 2, \cdots, t$, so we can utilize this to  derive an upper error bound  for $\Vert \hat{\beta}_t - \beta^\ast\Vert_1$ that holds uniformly for the entire solution trajectory.

In order to do so, we need the following lemma for bounding the probability of an upper tail of a sub-martingale, whose proof is provided in Section~\ref{sec:proof_of_lemma_martingale}. 
\begin{lemma}\label{lemma:martingale}
	Assume that $(X_k)$ is a sequence of independent sub-Gaussian random variables, with each $X_k$ having the sub-Gaussian norm given by $\sigma_k$ such that $\E[\exp(\nu X_k)] \leq \exp(\frac{\sigma_k^2 \nu^2}{2})$ for all $\nu >0$. If we define $S_n = \sum^n_{k = 1} X_k$, then 
	\begin{equation*}
	\mathbb{P}(\max_{1 \leq i \leq n} S_i \geq t ) \leq \exp\left(-\frac{t^2}{2\sum^n_{i = 1}\sigma^2_i}\right).
	\end{equation*}
\end{lemma}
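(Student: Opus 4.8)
The plan is to upgrade the classical Chernoff bound for a single sum into a bound on the running maximum by exhibiting an exponential supermartingale and applying a maximal inequality. Fix $\nu > 0$, let $\mathcal{F}_i = \sigma(X_1, \dots, X_i)$ be the natural filtration, and write $V_i := \sum_{k=1}^i \sigma_k^2$. First I would introduce the compensated exponential process
\[
M_i = \exp\!\Big( \nu S_i - \tfrac{\nu^2}{2} V_i \Big), \qquad M_0 = 1,
\]
and verify that $(M_i)_{i \geq 0}$ is a nonnegative supermartingale. This is the crux of the construction: using the independence of $X_i$ from $\mathcal{F}_{i-1}$ together with the sub-Gaussian moment bound $\E[\exp(\nu X_i)] \leq \exp(\sigma_i^2 \nu^2/2)$, one obtains $\E[M_i \mid \mathcal{F}_{i-1}] = M_{i-1}\, \E[\exp(\nu X_i)]\exp(-\nu^2\sigma_i^2/2) \leq M_{i-1}$. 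Thus the deterministic compensator $\tfrac{\nu^2}{2}V_i$ is exactly what is needed to absorb the sub-Gaussian MGF and turn the plain exponential $\exp(\nu S_i)$, which need not be a submartingale absent a mean-zero assumption, into a supermartingale.

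Next I would pass from the pointwise to the uniform statement. Let $\tau = \min\{ i \leq n : S_i \geq t\}$, with the convention $\tau = n$ if the threshold is never crossed; this is a bounded stopping time, so the optional stopping theorem for nonnegative supermartingales gives $\E[M_\tau] \leq \E[M_0] = 1$. On the event $A := \{ \max_{1 \leq i \leq n} S_i \geq t\}$ we have $S_\tau \geq t$, and since the compensator is nondecreasing, $V_\tau \leq V_n$, whence $M_\tau \geq \exp(\nu t - \tfrac{\nu^2}{2} V_n)$ on $A$. Combining this with $\E[M_\tau] \geq \E[M_\tau \mathbf{1}_A] \geq \exp(\nu t - \tfrac{\nu^2}{2}V_n)\, \mathbb{P}(A)$ yields the one-parameter bound
\[
\mathbb{P}(A) \leq \exp\!\Big( -\nu t + \tfrac{\nu^2}{2} V_n \Big).
\]
Equivalently, one could invoke Doob's maximal inequality applied to the supermartingale at level $\exp(\nu t - \tfrac{\nu^2}{2}V_n)$; both routes give the same estimate. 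Finally I would optimize the free parameter: minimizing $-\nu t + \tfrac{\nu^2}{2}V_n$ over $\nu > 0$ gives the Cram\'er-optimal choice $\nu = t / V_n$, at which the exponent equals $-t^2/(2V_n)$, producing $\mathbb{P}(\max_{1\leq i\leq n} S_i \geq t) \leq \exp(-t^2/(2\sum_{k=1}^n \sigma_k^2))$, as claimed.

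I expect the main obstacle to be precisely the transition from a single-time tail bound to the uniform-in-$i$ bound. A naive union bound over $i = 1, \dots, n$ would lose a factor of $n$ and ruin the exponent, so the essential device is the supermartingale structure rather than a union bound. The one subtlety to handle with care is that the compensator $\tfrac{\nu^2}{2} V_i$ is random through the stopping index $\tau$ but nondecreasing, so replacing $V_\tau$ by its upper bound $V_n$ on $A$ preserves the inequality in the direction we need. Everything else, namely the conditional-expectation computation establishing the supermartingale property and the final one-variable minimization, is routine.
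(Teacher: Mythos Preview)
Your proof is correct and follows the same overall strategy as the paper: build an exponential process with a martingale-type property, apply a Doob-type maximal inequality, and optimize the free parameter $\nu$. The paper's version is slightly more direct: it applies Doob's submartingale inequality to $e^{hS_i}$ itself, obtaining $\mathbb{P}(\max_i e^{hS_i}\ge e^{ht})\le e^{-ht}\,\E[e^{hS_n}]$, and then bounds $\E[e^{hS_n}]$ via the sub-Gaussian assumption before optimizing $h$. Your compensated supermartingale $M_i=\exp(\nu S_i-\tfrac{\nu^2}{2}V_i)$ with optional stopping is a minor but genuine refinement: as you correctly flag, $e^{hS_i}$ is a submartingale only when $\E[e^{hX_i}]\ge 1$, and the one-sided hypothesis $\E[e^{\nu X_k}]\le e^{\sigma_k^2\nu^2/2}$ for $\nu>0$ by itself only forces $\E X_k\le 0$, not $\ge 0$. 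In the paper's applications the increments are centered so this is harmless, but your construction closes the gap at the level of the stated lemma without any extra assumption.
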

In the rest of this paper, we write $z_i = \frac{\sqrt{\sum^j_{k=1} w_k^2}}{\sum^j_{k=1} w_k}  $ for notational convenience. The following result characterizes the concentration properties of $(\hat \Lambda_j - \Lambda)$ and $\sum^t_{j = 1}   \frac{w_{t,j}}{W_t}  \nabla \ell_j(\beta^\ast)$.

\begin{prp}\label{prop:simu_martingale} 
Suppose the weights are 
	\begin{equation*}
	w_{t, j} = \frac{1}{j^a },  
 z_j =  \frac{\sqrt{\sum^j_{k=1} w_k^2}}{\sum^j_{k=1} w_k}  .
	\end{equation*}
 for some $0 \leq a <1$. 
	For any $\epsilon>0$, there exist a constant $c> 0 $ such that with probability at least $1 - 4\delta$, for all $j = 1, 2, \cdots, t$, 
 \begin{equation*}
 \begin{split}
  \quad \Vert \hat{\Lambda}_0 - \Lambda\Vert_{\max} & \leq c   t_0^{-1/2} \sqrt{\log(p^2 / \delta)},
  \\
 \quad \Vert \hat{\Lambda}_j - \Lambda\Vert_{\max} & \leq c  z_j   \sqrt{\log(p^2 / \delta)},
 \\
 	\Vert \sum^j_{k = 1} \frac{w_{j, k}}{W_j}\nabla \ell_k(\beta^\ast) \Vert_\infty & \leq c    z_j   \sqrt{\log(pt/\delta)} \sqrt{\log (p / \delta)}.
  \end{split}
 \end{equation*}
\end{prp}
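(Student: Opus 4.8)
The plan is to establish the three bounds separately, reducing each to a scalar concentration statement by working entrywise and isolating the uniformity over $j=1,\dots,t$ as the genuinely new ingredient. First I would record the algebraic simplification that drives everything: since $y_k=x_k^\top\beta^\ast+\epsilon_k$, we have $\nabla\ell_k(\beta^\ast)=x_kx_k^\top\beta^\ast-x_ky_k=-\epsilon_k x_k$, so the third quantity is $\sum_{k\le j}\tfrac{w_k}{W_j}\epsilon_k x_k$. Writing $\hat\Lambda_j-\Lambda$ and this sum componentwise produces, for each coordinate pair $(a,b)$ (resp.\ each coordinate $a$), the normalized partial sum $S_j^{(ab)}/W_j$ with $S_j^{(ab)}=\sum_{k\le j}w_k(x_{k,a}x_{k,b}-\Lambda_{ab})$ (resp.\ $\sum_{k\le j}w_k\epsilon_k x_{k,a}$). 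Each increment is mean-zero and the $(S_j)$ are partial sums of independent terms, hence martingales, exactly as anticipated in the decomposition preceding the statement.

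The first bound concerns the \emph{fixed} sample $\hat\Lambda_0$, so no uniformity is needed: under Assumption~\ref{ass:covariate} each product $x_{i,a}x_{i,b}$ is sub-exponential (a product of two sub-Gaussians), and a Bernstein inequality for $\tfrac{1}{t_0}\sum_{i\le t_0}(x_{i,a}x_{i,b}-\Lambda_{ab})$ gives a deviation of order $t_0^{-1/2}\sqrt{\log(1/\delta')}$; a union bound over the $p^2$ entries with $\delta'=\delta/p^2$ yields the stated $c\,t_0^{-1/2}\sqrt{\log(p^2/\delta)}$.

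For the two uniform bounds the engine is the maximal inequality of Lemma~\ref{lemma:martingale}, applied to $(S_j)$ and to $(-S_j)$ for a two-sided control of $\max_{j\le n}|S_j^{(ab)}|$. Two points need care. (i) Lemma~\ref{lemma:martingale} asks for sub-Gaussian increments, whereas $x_{k,a}x_{k,b}$ and $\epsilon_k x_{k,a}$ are only sub-exponential; I would handle this by noting that the required mgf bound $\E[\exp(\nu X_k)]\le\exp(\sigma_k^2\nu^2/2)$ with variance proxy $\sigma_k\asymp w_k$ holds at the small $\nu$ that the Chernoff optimization selects for the target (sub-Gaussian-scale) deviation, equivalently by truncating the covariates and noise at level $\tau\asymp\sqrt{\log(pt/\delta)}$ on a high-probability event (a union over the $\le pt$ variables). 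This truncation route is precisely what introduces the extra $\sqrt{\log(pt/\delta)}$ factor in the gradient bound while the sharper variance-based control keeps the $\hat\Lambda_j$ bound at $\sqrt{\log(p^2/\delta)}$. (ii) The maximal inequality delivers the variance budget $\sum_{k\le t}w_k^2$, whereas the target normalization $z_jW_j=(\sum_{k\le j}w_k^2)^{1/2}$ is $j$-dependent and far smaller at early rounds. I would close this gap by a dyadic peeling argument: partition $\{1,\dots,t\}$ into blocks $[2^m,2^{m+1})$, apply the maximal inequality on each block-prefix, and divide by $W_j$; a union over the $\cO(\log t)$ blocks and over the $p^2$ (resp.\ $p$) coordinates is absorbed into the stated logarithms.

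The main obstacle, and where the careful bookkeeping lives, is this tension between the maximal inequality (uniformity over $j$ at the price of a single worst-case variance $\sum_{k\le t}w_k^2$) and the sharp per-round scale $z_j$. The peeling must be arranged so that, for $j$ in a dyadic block, the doubling of the variance budget is compensated by the comparabilities $\sum_{k\le 2^{m+1}}w_k^2\lesssim\sum_{k\le j}w_k^2$ and $W_{2^m}\gtrsim W_j$; these are exactly the regular-variation estimates for the polynomial weights $w_k=k^{-a}$ that force the restriction $0\le a<1$, and together they reconstitute $z_j$ up to a universal constant uniformly in $j$. A secondary subtlety is keeping the truncation level tight, since it is the only mechanism that separates the clean $\sqrt{\log(p^2/\delta)}$ of the $\hat\Lambda_j$ bound from the additional factor paid in the gradient bound for controlling the $t$ noise variables uniformly along the trajectory.
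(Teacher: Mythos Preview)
Your proposal follows the same skeleton as the paper's proof: write $\nabla\ell_k(\beta^\ast)=-\epsilon_k x_k$, treat each coordinate of the weighted partial sums as a martingale, invoke Lemma~\ref{lemma:martingale} for uniformity in $j$, and handle the sub-exponential tails by controlling $M_t=\max_{i\le t,\,a\le p}|x_{i,a}|$ on a high-probability event (this is exactly your truncation, and is what produces the extra $\sqrt{\log(pt/\delta)}$ in the gradient bound). Where you differ is in rigor rather than strategy. After applying Lemma~\ref{lemma:martingale} with $n=t$, the paper obtains a bound with the full variance $\sum_{k\le t}w_k^2$ but then, without further argument, replaces it by the per-round variance $\sum_{k\le j}w_k^2$ before dividing by $W_j$ to get $z_j$; your dyadic peeling (together with the regular-variation estimates for $w_k=k^{-a}$, $0\le a<1$) is precisely what justifies this replacement, and it matters most for the $\hat\Lambda_j$ bound where no $\log t$ factor is available to absorb a crude union over $j$. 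Likewise, for $\hat\Lambda_j-\Lambda$ and $\hat\Lambda_0-\Lambda$ the paper simply asserts the result in one line, whereas you are explicit about the sub-exponential nature of the increments. In short: same route, but your version fills in steps the paper leaves implicit.
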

By applying the above uniform martingale bounds to  the $\ell_\infty$ of $\nabla L_t(\beta^*;\hat \beta_{t-1})$ in \eqref{eq:grad_norm}, we obtain the following result. 
\begin{prp}\label{prp:bound_3}
	For all $\epsilon >0$, there exists a universal constant $c_\epsilon >0$ such that with probability at least $1 - 4\delta$, for all $j \leq t$, we have
\begin{equation*}
\Vert \nabla {L}_j(\beta^\ast;\hat \beta_{j-1} )\Vert_\infty \leq  c z_j \sqrt{\log (p t/ \delta)}   \sqrt{\log (p / \delta)}  +  2c   \sqrt{\log (p^2 / \delta)} \Big ( z_j + \frac{1 }{t_0^{1/2}}
\Big ) \| \hat{\beta}_{j-1} - \beta^\ast  \|_1.
\end{equation*}
\end{prp}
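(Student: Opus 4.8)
The plan is to treat Proposition~\ref{prp:bound_3} as an essentially deterministic consequence of the decomposition in~\eqref{eq:grad_norm} combined with the three uniform concentration bounds already supplied by Proposition~\ref{prop:simu_martingale}; the genuinely probabilistic work—the martingale maximal inequality of Lemma~\ref{lemma:martingale} and the uniform-in-$j$ control it yields—is carried out there, so what remains is bookkeeping on the high-probability event. Concretely, I would start from~\eqref{eq:grad_norm} written at index $j$,
$$\nabla L_j(\beta^\ast;\hat\beta_{j-1}) = \sum_{k=1}^{j}\frac{w_{j,k}}{W_j}\nabla\ell_k(\beta^\ast) + (\hat\Lambda_j - \hat\Lambda_0)(\hat\beta_{j-1}-\beta^\ast),$$
and apply the triangle inequality in the $\ell_\infty$ norm to split the bound into a ``noise'' term and a ``curvature mismatch'' term.

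For the noise term I would invoke the third bound of Proposition~\ref{prop:simu_martingale} verbatim, which controls $\Vert \sum_{k=1}^j \frac{w_{j,k}}{W_j}\nabla\ell_k(\beta^\ast)\Vert_\infty$ by $c z_j \sqrt{\log(pt/\delta)}\sqrt{\log(p/\delta)}$ uniformly over $j\le t$. This is precisely the first summand of the claimed bound, so no further estimation is needed here.

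For the curvature mismatch term I would use the elementary duality inequality $\Vert A v\Vert_\infty \le \Vert A\Vert_{\max}\Vert v\Vert_1$ with $A = \hat\Lambda_j - \hat\Lambda_0$ and $v = \hat\beta_{j-1}-\beta^\ast$, reducing matters to bounding $\Vert \hat\Lambda_j - \hat\Lambda_0\Vert_{\max}$. Writing $\hat\Lambda_j - \hat\Lambda_0 = (\hat\Lambda_j - \Lambda) - (\hat\Lambda_0 - \Lambda)$ and applying the triangle inequality, the first two bounds of Proposition~\ref{prop:simu_martingale} give $\Vert \hat\Lambda_j - \hat\Lambda_0\Vert_{\max} \le c\sqrt{\log(p^2/\delta)}\,(z_j + t_0^{-1/2})$ on the same event; absorbing the two constants into one (at the cost of a harmless factor of $2$) produces the second summand $2c\sqrt{\log(p^2/\delta)}(z_j + t_0^{-1/2})\Vert\hat\beta_{j-1}-\beta^\ast\Vert_1$.

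Finally I would observe that the three estimates of Proposition~\ref{prop:simu_martingale} hold \emph{simultaneously} on a single event of probability at least $1-4\delta$ and are each already uniform over $j=1,\dots,t$, so the combined bound inherits both the probability and the uniformity with no extra union-bound loss. The only point demanding care—rather than a real obstacle—is the random multiplier $\hat\beta_{j-1}-\beta^\ast$: since I bound $\Vert\hat\Lambda_j-\hat\Lambda_0\Vert_{\max}$ deterministically on the event and keep $\Vert\hat\beta_{j-1}-\beta^\ast\Vert_1$ as an explicit factor, no independence between $\hat\beta_{j-1}$ and the design is invoked. This is exactly what makes the result usable downstream, as the retained $\ell_1$-factor is precisely what Lemma~\ref{lm:bound_2} will later feed back to close the recursion for $\Vert\hat\beta_t-\beta^\ast\Vert_1$.
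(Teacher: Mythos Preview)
Your proposal is correct and follows essentially the same route as the paper: start from the decomposition~\eqref{eq:grad_norm}, split via the $\ell_\infty$ triangle inequality into the noise term and the curvature-mismatch term, bound the latter by $\|\hat\Lambda_j-\hat\Lambda_0\|_{\max}\|\hat\beta_{j-1}-\beta^\ast\|_1$ after inserting $\pm\Lambda$, and then invoke the three bounds of Proposition~\ref{prop:simu_martingale} on the single event of probability at least $1-4\delta$. Your explicit remark that no independence between $\hat\beta_{j-1}$ and the design is used---because the $\ell_1$ factor is retained rather than averaged---is a nice clarification the paper leaves implicit.
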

Now, we consider the scenario where the weights $w_{t,k}$ are $t$-independent.  Thus from now on, we sometimes write $w_{t,k}$ as $w_k$. Recall the definition of $z_t$ in Proposition \ref{prop:simu_martingale}.  We first quantify $z_t$ as follows. 
\begin{crl}\label{cor:weights}
Suppose the weight are $t$-independent and set in the form that  $w_{t,j} = \frac{1}{j^a}$ for $0\leq a < 1$, then there exists a constant $c_0>0$ such that for all $j\leq t$,
\begin{enumerate}
    \item[(i)]  $z_j     \leq \frac{c_0}{\sqrt{j}}$ if $0 \leq a < 1/2$;
    \item[(ii)]  $z_j    \leq\frac{c_0 \ln j}{\sqrt{j}}$ if $a= 1/2$;
    \item[(iii)] $z_j   \leq \frac{c_0}{j^{1-a}}$ if $\frac{1}{2} < a <1$.
\end{enumerate}
\end{crl}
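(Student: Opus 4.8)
The plan is to treat the ratio $z_j$ by estimating its numerator $\sqrt{\sum_{k=1}^j w_k^2}=\sqrt{\sum_{k=1}^j k^{-2a}}$ and its denominator $\sum_{k=1}^j w_k=\sum_{k=1}^j k^{-a}$ separately, using the elementary comparison of a monotone power sum with the corresponding integral, and then to divide the two estimates. Since only an upper bound on $z_j$ is required, I would bound the numerator from above and the denominator from below. The three cases in the statement will emerge entirely from the behavior of the numerator's exponent $2a$, because the denominator behaves like $j^{1-a}$ in all three regimes.

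First I would handle the denominator once and for all. As $x\mapsto x^{-a}$ is nonincreasing for $a\ge 0$, the integral test gives $\sum_{k=1}^j k^{-a}\ge \int_1^{j+1} x^{-a}\,dx=\frac{(j+1)^{1-a}-1}{1-a}$, which is $\Theta(j^{1-a})$; hence there is a constant $c_1>0$, depending only on $a$, with $\sum_{k=1}^j k^{-a}\ge c_1 j^{1-a}$ for every $j\ge 1$ (the boundary value $a=0$ giving exactly $j$). For the numerator I would split according to $2a$. When $2a<1$ (case (i)) the same comparison yields $\sum_{k=1}^j k^{-2a}\le 1+\int_1^j x^{-2a}\,dx\le C_1 j^{1-2a}$, so the numerator is at most $\sqrt{C_1}\,j^{(1-2a)/2}$. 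When $2a=1$ (case (ii)) the sum is the harmonic sum, bounded by $1+\ln j$, so the numerator is at most $\sqrt{1+\ln j}$. When $2a>1$ (case (iii)) the sum converges, $\sum_{k=1}^j k^{-2a}\le\zeta(2a)<\infty$, so the numerator is bounded by the constant $\sqrt{\zeta(2a)}$.

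Dividing each numerator bound by $c_1 j^{1-a}$ produces the three claimed rates. The only identity worth flagging is the exponent arithmetic in case (i): $\tfrac{1-2a}{2}-(1-a)=-\tfrac12$, so the $a$-dependence cancels and one obtains the rate $j^{-1/2}$ uniformly for $0\le a<1/2$. Case (iii) gives exponent $-(1-a)$ directly, and in case (ii) I would absorb the square root into the stated (slightly loose) logarithmic factor via $\sqrt{1+\ln j}\le c\,\ln j$ for $j\ge 3$. Setting $c_0$ to be the maximum of the three resulting constants completes the argument.

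I do not expect a genuine obstacle here; the content is routine integral comparison. The only care needed is bookkeeping to make every constant uniform in $j$, including small $j$: one must check that the lower bound $c_1 j^{1-a}$ holds from $j=1$ rather than merely asymptotically, absorb the $-1$ terms coming from $\int_1^{j+1}$, and cleanly treat the boundary exponent $a=0$ in case (i). Handling the small-$j$ regime and the $j\ge 3$ restriction in case (ii) (where $\ln j\ge 1$) is where the minor fiddliness lives.
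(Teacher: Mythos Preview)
Your proposal is correct and follows the natural integral-comparison argument; the paper itself omits a proof of this corollary, treating it as an elementary consequence of the power-sum estimates you describe. Your bookkeeping remarks about small $j$ and the boundary cases are exactly the minor points one would need to tidy up.
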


Here we note that the decay rate of $ \Vert \nabla {L}_t(\beta^\ast;\hat \beta_{t-1} )\Vert_\infty $ is determined by the diminishing rate of $z_t$.  Corollary \ref{cor:weights} indicates that $z_t$ achieves the fastest diminishing rate when $ 0 \leq a < 1/2$. Thus, in the rest of our paper, we adopt this choice of weights so that $z_j \lesssim j^{-1/2}$. Next, by combining Proposition \ref{prp:bound_3} with Lemma~\ref{lm:bound_2} and setting $\lambda_j \propto 2 \| \nabla L_j(\beta; \hat \beta_{j-1})\|_\infty$, for all $j \leq t$, we can bound the $\ell_1$ error as 
\begin{equation*}
\Vert \hat{\beta}_j - \beta^\ast\Vert_1  \leq  \frac{ 48sc}{\kappa}\sqrt{ \frac{ \log (p t / \delta) }{j}} \log (p / \delta) + \frac{ 96 s c }{\kappa}\Big ( \frac{c_0}{j^{1/2}} +  \frac{1}{t_0^{1/2}}\Big ) \Vert \hat{\beta}_{j-1} - \beta^\ast\Vert_1.
\end{equation*}
The above result suggests that with high probability, for each online learning round $j$, the statistical error $\Vert \hat{\beta}_j - \beta^\ast\Vert_1 $ can be bounded in terms of the statistical error $ \Vert \hat{\beta}_{j-1} - \beta^\ast\Vert_1 $ incurred within the previous online learning round plus an $\cO(\sqrt{ \frac{\log t}{j}})$ term, which serves as the building block in analyzing the behavior of $\{ \hat \beta_j \}_{j=1}^t$.

Finally, because the above relationship holds for all $j\leq t$, by recursively applying it,  we characterize the convergence behavior of $\| \hat \beta_t - \beta^*\|_1$ as follows, and provide its proof in Section~\ref{sec:proof_of_thm_main}. 
\begin{thm}\label{thm:main}
Suppose Assumption \ref{ass:covariate} holds and $\ell_0(\beta)$ satisifies the RSC condition \eqref{ass:RSC}. 
	With probability at least $1 - 4 \delta$, we have 
\$\Vert \hat{\beta}_t - \beta^\ast\Vert_1 \leq \sum^t_{j = 1} \left( \prod^t_{k = j + 1} a_k\right) b_j  +  \left(\prod^t_{j = 1} a_j\right)    \Vert \hat{\beta}_0 - \beta^\ast\Vert_1
\$
where 
\begin{align*}
a_j & = \frac{ 96 s c}{\kappa} \sqrt{\log(p^2/\delta)} \left( \frac{c_0}{j^{1/2}}+  \frac{1}{t_0^{1/2}}\right),
\\
b_j & = \frac{ 48sc}{\kappa}\sqrt{ \frac{ \log (pt / \delta) }{j}} \sqrt{ \log (p / \delta) }.
\end{align*}
In addition, suppose the size of initial batch $t_0$ ensures $a_j < 1$ for large $j$, we have with probability at least $1-4\delta$,
    \begin{equation*}
        \begin{split}
       \Vert \hat{\beta}_t - \beta^\ast\Vert_1 
       \leq  \widetilde \cO \Big (\frac{s }{\kappa \sqrt{t}} \Big  )  \text{ and }
       \Vert \hat{\beta}_t - \beta^\ast\Vert_2 
       \leq  \widetilde \cO \Big ( \frac{\sqrt{s} }{\kappa \sqrt{t}} \Big  ). 
        \end{split}
    \end{equation*}
\end{thm}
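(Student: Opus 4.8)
The plan is to treat the displayed one-step inequality preceding the theorem as a linear recursion for the $\ell_1$ error $e_j := \Vert \hat\beta_j - \beta^\ast\Vert_1$, namely $e_j \le a_j e_{j-1} + b_j$ with $a_j,b_j$ as in the statement. This inequality is exactly what one obtains by combining Proposition~\ref{prp:bound_3}, Lemma~\ref{lm:bound_2} with the choice $\lambda_j \propto 2\Vert\nabla L_j(\beta^\ast;\hat\beta_{j-1})\Vert_\infty$, and the bound $z_j \le c_0/\sqrt{j}$ from Corollary~\ref{cor:weights}(i). First I would unroll this recursion by induction on $t$: the base case $t=0$ is the trivial identity $e_0 \le e_0$, and the inductive step multiplies the bound for $e_{t-1}$ by $a_t$ and adds $b_t$, after which relabeling the index and using the convention that an empty product equals $1$ yields exactly the closed form of Part~1. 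This step is purely algebraic and carries no probabilistic content, since the event on which Proposition~\ref{prp:bound_3} holds (probability at least $1-4\delta$) is a single event valid simultaneously for every $j \le t$.

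The remaining work is to extract the rate $\widetilde\cO(s/(\kappa\sqrt t))$ from the closed form, treating the two terms separately. For the homogeneous term $\big(\prod_{j=1}^t a_j\big)e_0$ I would use the hypothesis that $t_0$ is large enough that $a_j<1$ for all large $j$: since $a_j = \tfrac{96sc}{\kappa}\sqrt{\log(p^2/\delta)}\big(c_0 j^{-1/2} + t_0^{-1/2}\big) \to \tfrac{96sc}{\kappa}\sqrt{\log(p^2/\delta)}\,t_0^{-1/2} =: \rho_0 < 1$, there is a threshold $j^\ast$ and a constant $\rho \in (\rho_0,1)$ with $a_j \le \rho$ for $j \ge j^\ast$. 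Splitting $\prod_{j=1}^t a_j = \big(\prod_{j<j^\ast} a_j\big)\big(\prod_{j^\ast \le j \le t} a_j\big)$ bounds it by $C_\ast\,\rho^{\,t-j^\ast+1}$, which decays geometrically and is therefore $o(t^{-1/2})$.

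The main obstacle is the inhomogeneous sum $\sum_{j=1}^t \big(\prod_{k=j+1}^t a_k\big) b_j$, because $a_k$ can exceed $1$ for the finitely many small indices $k<j^\ast$, so the products $\prod_{k=j+1}^t a_k$ do not contract uniformly. I would handle this by separating the finitely many terms with $j<j^\ast$ (each carrying a full tail product $\le C_\ast\,\rho^{\,t-j^\ast+1}$, so their total is $\widetilde\cO(\rho^t) = o(t^{-1/2})$) from the terms with $j \ge j^\ast$, where $\prod_{k=j+1}^t a_k \le \rho^{\,t-j}$ and $b_j = B\,j^{-1/2}$ with $B = \tfrac{48sc}{\kappa}\sqrt{\log(pt/\delta)\log(p/\delta)}$. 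Writing $m=t-j$, the surviving sum is $\sum_m \rho^m B(t-m)^{-1/2}$; splitting at $m=t/2$ bounds the low-$m$ block by $B\sqrt{2/t}\sum_{m\ge 0}\rho^m = \cO(B/\sqrt t)$ (the dominant contribution) and the high-$m$ block by $\cO(B\sqrt t\,\rho^{t/2}) = o(t^{-1/2})$ thanks to the geometric factor. Since $B = \widetilde\cO(s/\kappa)$, this gives $e_t = \widetilde\cO(s/(\kappa\sqrt t))$, the first claimed bound; the interplay between the polynomial decay of $b_j$ and the geometric decay of the tail product is the delicate point.

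Finally, for the $\ell_2$ bound I would invoke the second inequality of Lemma~\ref{lm:bound_2}, $\Vert\hat\beta_t-\beta^\ast\Vert_2^2 \le \kappa^{-1}\big(\lambda_t + \Vert\nabla L_t(\beta^\ast;\hat\beta_{t-1})\Vert_\infty\big)\Vert\hat\beta_t-\beta^\ast\Vert_1$. Feeding the just-proved $\ell_1$ rate back into Proposition~\ref{prp:bound_3} shows $\Vert\nabla L_t(\beta^\ast;\hat\beta_{t-1})\Vert_\infty = \widetilde\cO(t^{-1/2})$, the leading $z_t\sqrt{\log(pt/\delta)\log(p/\delta)}$ term dominating while the data-dependent correction is of lower order; hence $\lambda_t = \widetilde\cO(t^{-1/2})$ as well. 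Multiplying $\widetilde\cO(t^{-1/2})$ by the $\ell_1$ rate $\widetilde\cO(s/(\kappa\sqrt t))$ and taking square roots yields $\Vert\hat\beta_t-\beta^\ast\Vert_2 = \widetilde\cO(\sqrt s/(\kappa\sqrt t))$, completing the proof.
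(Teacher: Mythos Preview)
Your proposal is correct and follows the same overall skeleton as the paper: unroll the one-step recursion $e_j \le a_j e_{j-1} + b_j$ to get the closed form, then control the convolution $\sum_j(\prod_{k>j}a_k)b_j$ and the homogeneous product separately, and finally upgrade to the $\ell_2$ bound via Lemma~\ref{lm:bound_2}. Two technical choices differ from the paper and are in fact slightly more elementary.

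First, for the inhomogeneous sum the paper isolates a dedicated Lemma~\ref{lemma:series} that bounds $\sum_{j=1}^t q^{t-j} j^{-1/2} \le C\ln t/\sqrt{t}$ via an integral comparison and integration by parts. Your split-at-$m=t/2$ argument achieves the same $\widetilde\cO(t^{-1/2})$ conclusion with less machinery (and actually without the extra $\ln t$); both are valid, yours is shorter. Second, for the $\ell_2$ rate the paper sets up a \emph{second} recursion, this time for $\Vert\nabla L_j(\beta^\ast;\hat\beta_{j-1})\Vert_\infty$ itself, by substituting $\Vert\hat\beta_{j-1}-\beta^\ast\Vert_1 \lesssim (s/\kappa)\Vert\nabla L_{j-1}(\beta^\ast;\hat\beta_{j-2})\Vert_\infty$ from Lemma~\ref{lm:bound_2} back into Proposition~\ref{prp:bound_3}, and then reruns the Part~(a) analysis. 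Your route---plug the already-established $\ell_1$ rate for $\hat\beta_{t-1}$ directly into Proposition~\ref{prp:bound_3}---is a one-shot argument that avoids the second recursion entirely; it works because $(z_t + t_0^{-1/2})$ is $O(1)$ and the assumption $q<1$ forces $s/(\kappa\sqrt{t_0})$ to be bounded, so the data-dependent correction is indeed $\widetilde\cO(t^{-1/2})$ (it is the same order as the leading term rather than strictly ``lower order,'' but that does not affect the conclusion). Both approaches land on $\Vert\nabla L_t(\beta^\ast;\hat\beta_{t-1})\Vert_\infty = \widetilde\cO(t^{-1/2})$ and hence the claimed $\ell_2$ rate.
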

\begin{remark}
It is worth pointing out that the convergence of our algorithm is ensured if the size of initial batch $t_0 \geq (96 s c /\kappa)^2 \log(p^2/\delta)$, which only depends on the distribution of covariate $x_j$'s, the condition number $\kappa$, and the sparsity level $s = \|\beta^*\|_0$, instead of the underlying parameter $\beta^*$. In comparison with the minimal signal assumption $\min_{j \in S} \|\beta_j^*\| \geq 2c \sqrt{\log p/t_0}$ by \cite{fan2018statistical}, which guarantees the support of $\beta^*$ can  be identified by the initial batch with $t_0$ data points, our approach  completely removes this minimal signal assumption. Consequently, our approach would exhibit superior performance in weak signal scenarios where $\min_{j \in S} \|\beta_j^*\|$ is small or  the initial batch is small. We verify this phenonmenon in our numerical experiments. 
\end{remark}
\begin{remark} Notably, as discussed in Proposition~\ref{prop:obj_RSC}, all loss functions $\{ L_j(\bullet;\tilde \beta) \}_{j=0}^t$ satisfy  the RSC condition once it holds for the squared loss $\ell_0(\beta)$ induced by the initial batch. Therefore, when employing our framework, it suffices to guarantee that the RSC condition \emph{once} for the initial batch instead of guaranteeing  it for all online rounds. This boosts the applicability of our method in practice especially when there exist  badly behaved data points in certain rounds. 
\end{remark}
\begin{remark}
Theorem~\ref{thm:main} quantifies the behavior of the entire solution trajectory $\{\hat \beta_j\}_{j=1}^t$ and ensures its good performance over all online rounds. 
% instead of the last-iterate estimator only. \scomment{what papers considers only last iterate result?}
This implies that our approach generates better and better estimators as the online learning round increases. The solution sequence converges to the underlying coefficient vector $\beta^*$ at the rate of $\widetilde \cO( \sqrt{s/t})$  in terms of the $\ell_2$-norm error, which matches the minimax lower bound of the sparse linear regression in the offline case up to logarithmic terms \citep{raskutti2011minimax}. Compared with existing work on offline $\ell_2$-regularized regression that requires to store the entire dataset of cost $\cO(pt)$, such as the offline LASSO, our approach enjoys a much lower \emph{fixed} memory cost of $\cO( p t_0 + p^2)$ and is favored for high-dimension big-data applications where both covariate dimension $p$ and number of data points $t$ are large. 
\end{remark}

\section{Numerical Experiments}
\label{sec:experiments}

After studying the theoretical performance of our  {\tt Olin\_LASSO} algorithm in Section~\ref{sec:theory}, we now continue to investigate the practical performance of our algorithm in various settings. We consider the scenario where the covariate vector is of high dimensionality such that $p = 1000$. 
We consider the \emph{weak} signal scenario and generate the underlying coefficient $\beta^*$ as follows:
We generate a sparse vector whose first 20 entries are nonzero and the rest are zeros. In this case, we let $S = \text{Support}(\beta^*) = \{1,2,\cdots,20\}$ be the support of underlying $\beta^*$ and have the sparsity level $s = |S| = \| \beta^*\|_0 = 20$. For each entry $j \in S$, we independently generate a weak signal such that $\beta_j^* \sim \cN(0,0.25)$. In our experiments, the mean absolute value of the these nonzero entries is $0.261$, and the $\ell_1$- and $\ell_2$-norm of $\beta^*$ are $\| \beta^*\|_1 = 5.22$ and $\| \beta^*\|_2 = 1.30$, respectively.

We investigate the numerical performance of our algorithm against a baseline algorithm {\tt OS\_LASSO\_K} \citep{fan2018statistical}. {\tt OS\_LASSO\_K} is a two-phase algorithm that adopts the standard least-squared loss $h_t(\beta)$ instead of our variant. In the burn-in phase,  {\tt OS\_LASSO\_K} determines the set of nonzero entries $S_0$ of $\beta^*$ by running a penalized regression using the initial batch of size $t_0$. We will use LASSO in all of our experiments. In the online learning phase, upon receiving each online data point $(x_i,y_t)$, {\tt OS\_LASSO\_K}  conducts $K$ iterative hard-thresholding 
gradient descent steps $\beta_{i,k+1} = \Pi_{S_0}(\beta_{i,k}  - \eta \nabla h_t(\beta_{i,k}))$ for $k=1,\cdots,K$
that only keeps the entries within $S_0$ and truncates the rest entries to zero.

We test our algorithm and the baseline algorithms over the covariate correlation setup:
\begin{itemize}
    % \item Independent correlation: We independently generate each entry of $x_i$ via the standard norm distribution $\cN(0,1)$. That is, $x_{i,j} \sim \cN(0,1)$ for $j=1,\cdots, p$. 
    \item Toeplitz $\rho = 0.5$ correlation: We generate the covariate $x_i \in \R^{p}$ under a multivariate norm distribution that $x_i \sim \cN({\bf{0}},\Sigma)$ where $\Sigma \in \R^{p \times p }$ is a covariance matrix such that $\Sigma_{i,i} = 1$ and $\Sigma_{i,j} = \rho^{|i-j|}$ for all $i\neq j$. 
\end{itemize}
In the above setup, to generate data pair $(x_i,y_i)$, we first generate the covariate $x_i \in \R^p$ as above, then independently generate a noise term $\epsilon_i \sim \cN(0,1)$, and set the response as $y_i = x_i^\top \beta^* + \epsilon_i$. 

We consider following two experiments:
\begin{enumerate}
    \item[1.] We test the performance of above algorithms over different initial batch sizes $t_0 \in \{ 50,100,$ \\ $150,\cdots ,500\}$ under the above Toeplitz $\rho = 0.5$ correlation design. After running each simulation for $T={10}^4$ online learning  rounds, we plot the obtained mean squared error (MSE) $\| \beta_T - \beta^* \| ^2_2$ against the  initial batch size $t_0$ in Figure~\ref{fig:init}. We set the regularization coefficient for our {\tt Olin\_LASSO} algorithm as  $\lambda_t = \sqrt{ \frac{\log(p)}{t} }$, and set the step-size of the baseline algorithms {\tt OS\_LASSO\_K=1} and {\tt OS\_LASSO\_K=20} as $\eta = 0.001$.

    \item[2.] We fix the initial batch size as $t_0 = 100$, test the algorithms over $T=10^4$ online rounds under the above covariate design, and plot the MSE $\| \beta_t - \beta^* \| ^2_2$ against online round $t$ in Figure \ref{fig:online}. Other parameters are set as the same as the first experiment.
\end{enumerate}

\begin{figure}[t]
  \centering
  \includegraphics[width=0.6\textwidth]{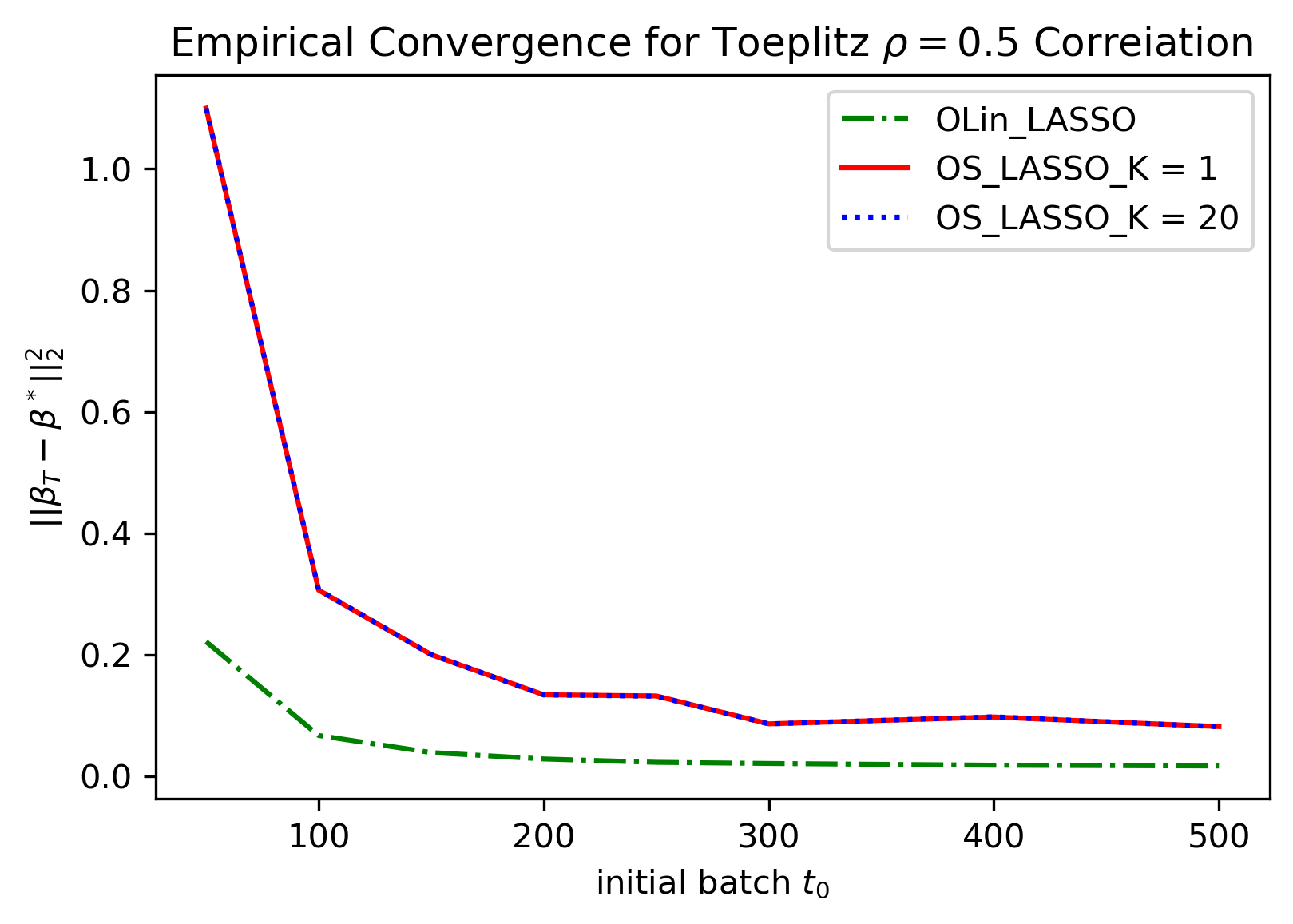}
  \caption{Empirical Convergence of MSE $\| \beta_T - \beta^* \| ^2_2$ under weak signal setup for $t_0 =50,100,150,\cdots,500$ and $T=10000$ online learning rounds for Toeplitz $\rho=0.5$ covariate correlation design.}
  \label{fig:init}
    \vskip -0.31cm
\end{figure}
\begin{figure}[t]
  \centering
  \includegraphics[width=0.6\textwidth]{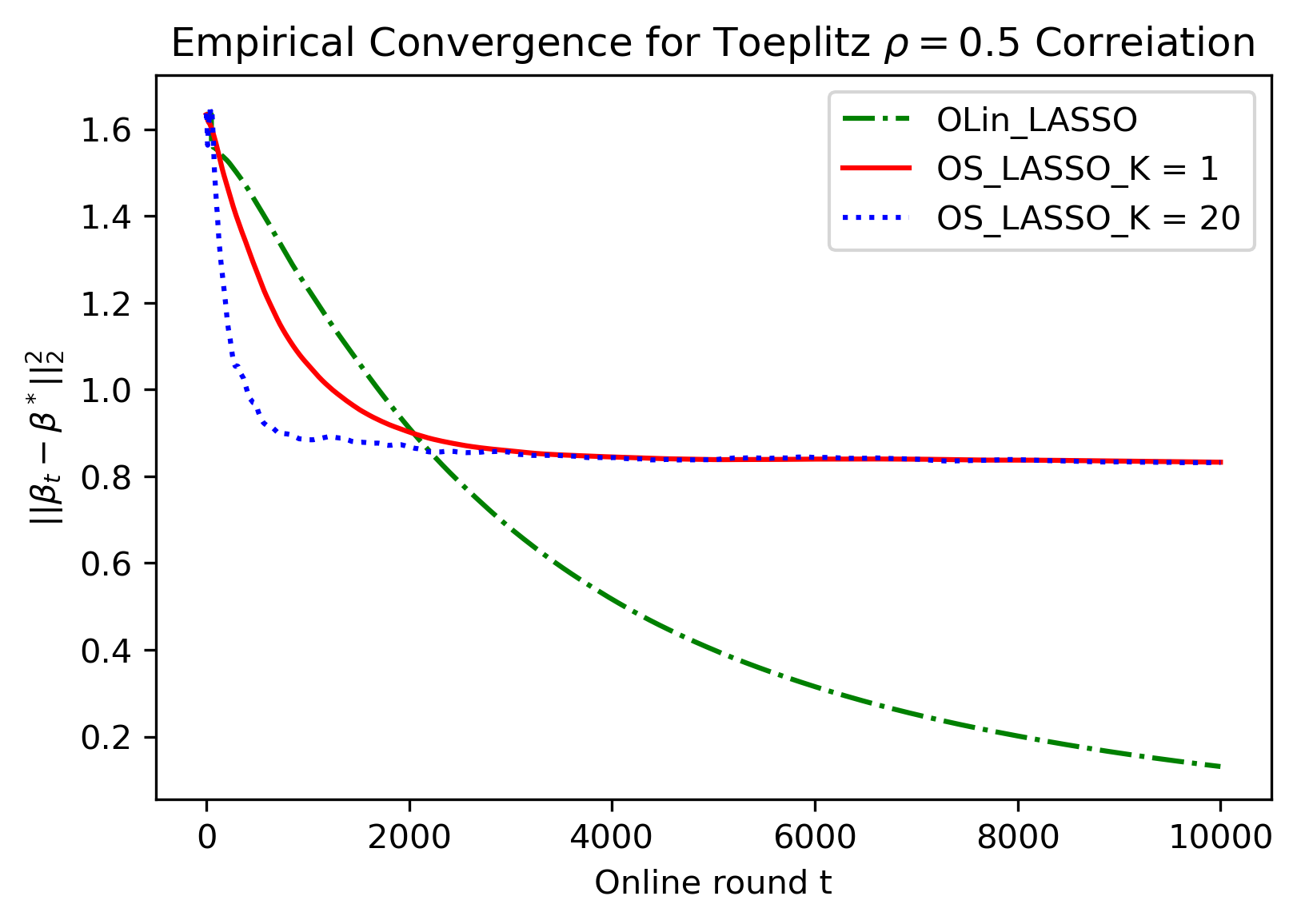}
  \caption{Empirical Convergence of MSE $\| \beta_t - \beta^* \| ^2_2$ under weak signal setup for online learning rounds $t \in [0,10000]$ and initial batch size $t_0 = 100$  for Toeplitz $\rho=0.5$ covariate correlation design.}
  \vskip -0.31cm
  \label{fig:online}
\end{figure}
From Figure~\ref{fig:init}, we observe that our algorithm outperforms {\tt OS\_LASSO\_K=1}  and {\tt OS\_LASSO\_K=20}, especially when the initial batch size $t_0$ is small. For example, when $t_0 = 100$,  {\tt OS\_LASSO\_K=1}  and {\tt OS\_LASSO\_K=20} generates estimator of MSE being around $0.49$, 
 while our {\tt Olin\_LASSO} algorithm generate a better estimator that has a MSE around $0.05$. 

% when the initial batch size ($t_0\leq 200$), the solutions converged by {\tt OS\_LASSO} K = 1 and K = 20 are very different from the  underlying coefficient $\beta^*$, both exceeding 0.2. In contrast, our algorithm vary less with the initial batch size, and it can achieve good performance with MSE less than 0.2 on all $t_0 =50,100,150,\cdots,500$.

Further, from Figure~\ref{fig:online}, we can see that {\tt OS\_LASSO\_K=1}  and {\tt OS\_LASSO\_K=20} do not generate sequences that converge to the underlying estimator $\beta^*$, as evidenced by the observation that the MSE is not improving when receiving more online data. In contrast, our algorithm can generate consistent estimators that converge to the underlying $\beta^*$. 

Both of the above experiments suggest that {\tt OS\_LASSO} is sensitive to the size of initial batch $t_0$ and can only converge when $t_0$ is large enough to determine the support of $\beta^*$, which requires a large initial batch in the weak signal scenario. By contrast, our algorithm exhibits superior performance to {\tt OS\_LASSO} even if  only a small initial batch is available. 

\begin{figure}[t]
  \centering
  \includegraphics[width=0.6\textwidth]{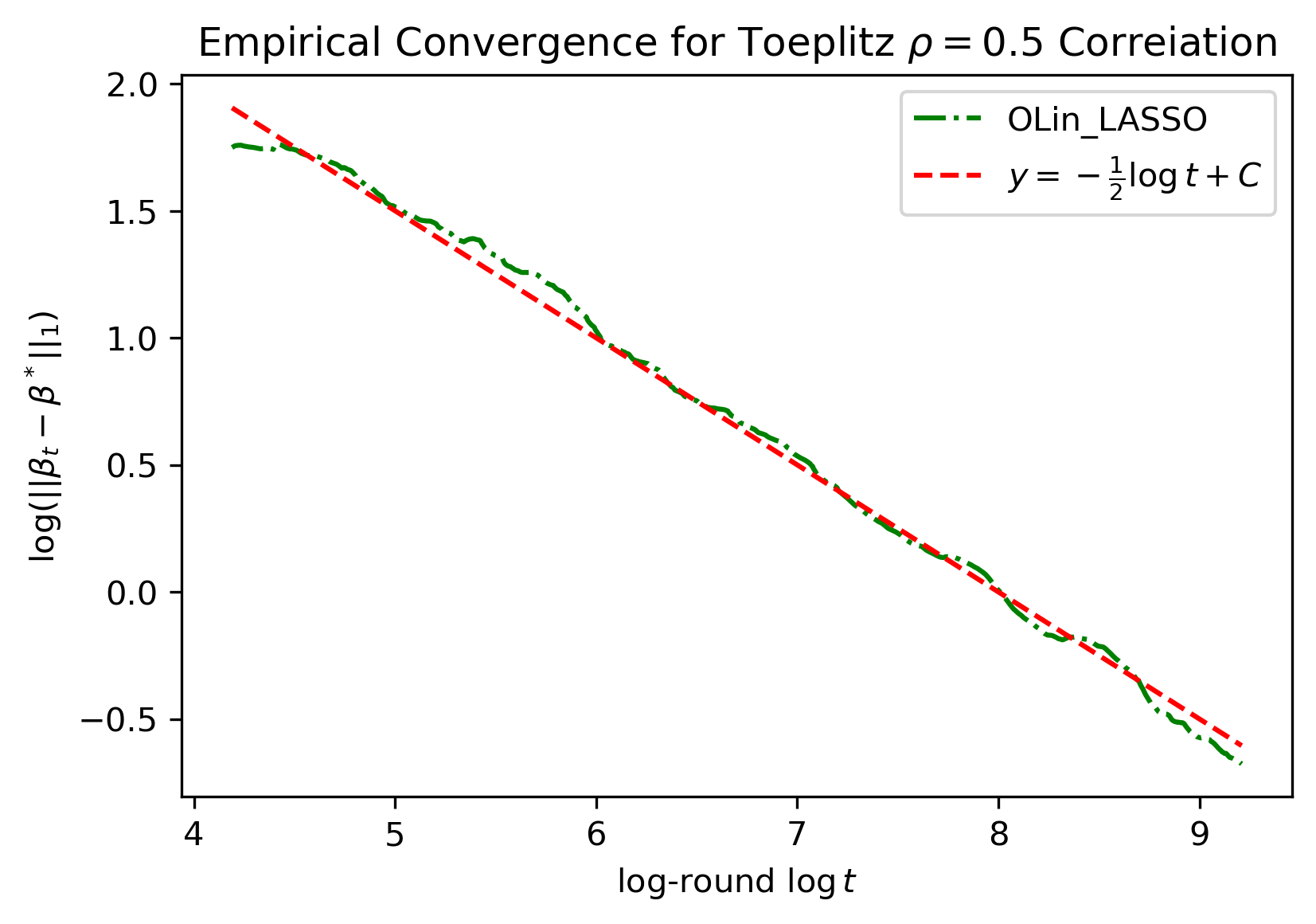}
  \caption{$\log(\| \beta_t - \beta^* \|_1$) against $\log(t)$ under weak signal setup for $t_0 = 500$ and $T=10000$ online learning rounds for Toeplitz $\rho=0.5$ covariate correlation design.}
  \label{fig:log_rate}
  \vskip -0.3cm
\end{figure}

To verify the rate of convergence provided by our theorem,  we plot the log-error $\log( \| \beta_t - \beta^*\|_1)$ against the log-round $\log t$ in Figure~\ref{fig:log_rate}. We observe that it preserves a slope around $-1/2$. This suggests that $\{\hat \beta_t\}$ indeed converges to $\beta^*$ at the rate of $\tilde \cO(\frac{1}{\sqrt{T} })$ under $\ell_1$-norm, which matches Theorem~\ref{thm:main}. 

In addition, we test our algorithm against the baselines under other types of covariate correlation distributions. We also test these algorithms under a  strong signal setting where $\beta^*$  has significant absolute values for nonzero entries. Our algorithm exhibits superior performance against the baseline algorithms in all scenarios. We provide detailed experiment setups and numerical results in Section~\ref{sec:more_numerics} of the supplement.

\section{Conclusion}
In this paper, we propose a novel framework to solve the online sparse linear regression problem. Our framework adopts a loss function that consists of a squared loss induced by the initial batch  and a linearized term induced by the online  data, and our approach is memory efficient with a memory complexity of $\cO( pt_0 + p^2)$. By ensuring the RSC condition only for the initial squared loss, the entire solution trajactory $\{ \hat \beta_t\}$  falls into a restricted set, and converges to $\beta^*$ in the optimal rate  of $\widetilde \cO(\sqrt{s/t} )$ 
%\scomment{$s$, $n$, $p$, $t$?} 
under $\ell_2$-norm, establishing the benchmark in online sparse linear regression. 

In addition, our algorithm exhibits superior practical performance comparing with the existing arts under various design correlations. These numerical experiments also aligns with our nonasymptotic rate of convergence result well. 

One possible limitation of our framework is that it requires an initial batch of size $t_0 \geq (96 s c /\kappa)^2 \log(p^2/\delta)$ to ensure convergence, where the primitives such as the sparsity level $s$ and the conditional number $\kappa$ are unknown in practice. It remains an open problem whether problem-independent algorithms can be developed to avoid the usage of such primitives but still exhibit a comparable performance theoretically.  

\bibliographystyle{ims}
\bibliography{online}

\begin{thebibliography}{20}
\expandafter\ifx\csname natexlab\endcsname\relax\def\natexlab#1{#1}\fi
\expandafter\ifx\csname url\endcsname\relax
  \def\url#1{\texttt{#1}}\fi
\expandafter\ifx\csname urlprefix\endcsname\relax\def\urlprefix{URL }\fi

\bibitem[{Agarwal et~al.(2012)Agarwal, Negahban and
  Wainwright}]{agarwal2012fast}
\textsc{Agarwal, A.}, \textsc{Negahban, S.} and \textsc{Wainwright, M.~J.}
  (2012).
\newblock Fast global convergence rates of gradient methods for
  high-dimensional statistical recovery.
\newblock \textit{The Annals of Statistics} \textbf{40} 2452--2482.

\bibitem[{Beck and Teboulle(2009)}]{beck2009fast}
\textsc{Beck, A.} and \textsc{Teboulle, M.} (2009).
\newblock A fast iterative shrinkage-thresholding algorithm for linear inverse
  problems.
\newblock \textit{SIAM journal on imaging sciences} \textbf{2} 183--202.

\bibitem[{Bertsekas(2011)}]{bertsekas2011incremental}
\textsc{Bertsekas, D.~P.} (2011).
\newblock Incremental proximal methods for large scale convex optimization.
\newblock \textit{Mathematical programming} \textbf{129} 163--195.

\bibitem[{Duchi et~al.(2011)Duchi, Hazan and Singer}]{duchi2011adaptive}
\textsc{Duchi, J.}, \textsc{Hazan, E.} and \textsc{Singer, Y.} (2011).
\newblock Adaptive subgradient methods for online learning and stochastic
  optimization.
\newblock \textit{Journal of machine learning research} \textbf{12}.

\bibitem[{Fan et~al.(2018{\natexlab{a}})Fan, Gong, Li and
  Sun}]{fan2018statistical}
\textsc{Fan, J.}, \textsc{Gong, W.}, \textsc{Li, C.~J.} and \textsc{Sun, Q.}
  (2018{\natexlab{a}}).
\newblock Statistical sparse online regression: A diffusion approximation
  perspective.
\newblock In \textit{International Conference on Artificial Intelligence and
  Statistics}. PMLR.

\bibitem[{Fan and Li(2001)}]{fan2001variable}
\textsc{Fan, J.} and \textsc{Li, R.} (2001).
\newblock Variable selection via nonconcave penalized likelihood and its oracle
  properties.
\newblock \textit{Journal of the American statistical Association} \textbf{96}
  1348--1360.

\bibitem[{Fan et~al.(2018{\natexlab{b}})Fan, Liu, Sun and Zhang}]{fan2018lamm}
\textsc{Fan, J.}, \textsc{Liu, H.}, \textsc{Sun, Q.} and \textsc{Zhang, T.}
  (2018{\natexlab{b}}).
\newblock I-lamm for sparse learning: Simultaneous control of algorithmic
  complexity and statistical error.
\newblock \textit{Annals of statistics} \textbf{46} 814.

\bibitem[{Foster et~al.(2015)Foster, Karloff and Thaler}]{foster2015variable}
\textsc{Foster, D.}, \textsc{Karloff, H.} and \textsc{Thaler, J.} (2015).
\newblock Variable selection is hard.
\newblock In \textit{Conference on Learning Theory}. PMLR.

\bibitem[{Kale(2014)}]{kale2014open}
\textsc{Kale, S.} (2014).
\newblock Open problem: Efficient online sparse regression.
\newblock In \textit{Conference on Learning Theory}. PMLR.

\bibitem[{Kale et~al.(2017)Kale, Karnin, Liang and P{\'a}l}]{kale2017adaptive}
\textsc{Kale, S.}, \textsc{Karnin, Z.}, \textsc{Liang, T.} and \textsc{P{\'a}l,
  D.} (2017).
\newblock Adaptive feature selection: Computationally efficient online sparse
  linear regression under rip.
\newblock In \textit{International Conference on Machine Learning}. PMLR.

\bibitem[{Kushner and Yin(1997)}]{kushner:1997}
\textsc{Kushner, H.} and \textsc{Yin, G.} (1997).
\newblock \textit{Stochastic Approximation Algorithms and Applications}.
\newblock Springer-Verlag.

\bibitem[{Langford et~al.(2009)Langford, Li and Zhang}]{langford2009sparse}
\textsc{Langford, J.}, \textsc{Li, L.} and \textsc{Zhang, T.} (2009).
\newblock Sparse online learning via truncated gradient.
\newblock \textit{Journal of Machine Learning Research} \textbf{10}.

\bibitem[{Loh and Wainwright(2015)}]{loh2013regularized}
\textsc{Loh, P.-L.} and \textsc{Wainwright, M.~J.} (2015).
\newblock Regularized $m$-estimators with nonconvexity: statistical and
  algorithmic theory for local optima.
\newblock \textit{Journal of Machine Learning Research} \textbf{16} 559--616.

\bibitem[{Rakhlin et~al.(2011)Rakhlin, Shamir and
  Sridharan}]{rakhlin2011making}
\textsc{Rakhlin, A.}, \textsc{Shamir, O.} and \textsc{Sridharan, K.} (2011).
\newblock Making gradient descent optimal for strongly convex stochastic
  optimization.
\newblock \textit{arXiv preprint arXiv:1109.5647} .

\bibitem[{Raskutti et~al.(2010)Raskutti, Wainwright and
  Yu}]{raskutti2010restricted}
\textsc{Raskutti, G.}, \textsc{Wainwright, M.~J.} and \textsc{Yu, B.} (2010).
\newblock Restricted eigenvalue properties for correlated gaussian designs.
\newblock \textit{The Journal of Machine Learning Research} \textbf{11}
  2241--2259.

\bibitem[{Raskutti et~al.(2011)Raskutti, Wainwright and
  Yu}]{raskutti2011minimax}
\textsc{Raskutti, G.}, \textsc{Wainwright, M.~J.} and \textsc{Yu, B.} (2011).
\newblock Minimax rates of estimation for high-dimensional linear regression
  over $l\_q $-balls.
\newblock \textit{IEEE transactions on information theory} \textbf{57}
  6976--6994.

\bibitem[{Tibshirani(1996)}]{tibs1996regression}
\textsc{Tibshirani, R.} (1996).
\newblock {Regression shrinkage and selection via the lasso}.
\newblock \textit{Journal of the Royal Statistical Society, Series B}
  \textbf{58} 267--288.

\bibitem[{Wainwright(2019)}]{wainwright2019high}
\textsc{Wainwright, M.~J.} (2019).
\newblock \textit{High-dimensional statistics: A non-asymptotic viewpoint},
  vol.~48.
\newblock Cambridge University Press.

\bibitem[{Xiao(2009)}]{xiao2009dual}
\textsc{Xiao, L.} (2009).
\newblock Dual averaging method for regularized stochastic learning and online
  optimization.
\newblock \textit{Advances in Neural Information Processing Systems}
  \textbf{22}.

\bibitem[{Zhang(2010)}]{zhang2010nearly}
\textsc{Zhang, C.-H.} (2010).
\newblock Nearly unbiased variable selection under minimax concave penalty.
\newblock \textit{The Annals of statistics} \textbf{38} 894--942.

\end{thebibliography}

\appendix
%Notations for Appendices
\renewcommand{\theequation}{S.\arabic{equation}}
\renewcommand{\thetable}{S.\arabic{table}}
\renewcommand{\thefigure}{S.\arabic{figure}}
\renewcommand{\thesection}{S.\arabic{section}}
\renewcommand{\thelemma}{S.\arabic{lemma}}

\vspace{30pt}
\noindent{\bf \LARGE Appendix}

%\section*{Supplement Material}
\section{Proof of Results}
\subsection{Proof of Lemma~\ref{lemma:cone}} \label{sec:proof_of_lemma_cone}
% \begin{lemma}\label{lemma:cone}
% 	At each step, if we adaptively choose the regularization parameter $\lambda_t$ to be 
% 	\begin{equation*}
% 	\lambda_t \geq  2\Vert \nabla L_t(\beta^\ast;\hat \beta_{t-1})\Vert_\infty,
% 	\end{equation*}
% 	then the estimator $\hat{\beta}_t$ satisfies
% 	\begin{equation*}
% 	\hat{\beta}_t - \beta^\ast \in \mathcal{C}_S = \Big \{ \beta \in \R^p: \|(\beta - \beta^*)_{S^c} \|_1 \leq  3 \| (\beta - \beta^*)_S\|_1 \Big \}.
% 	\end{equation*}
% 	In other words, if at each step we adaptively choose $\lambda_t$ to be large enough, we can guarantee that the estimation error vector lies in the cone. \textbf{In the cone means that most of the mass of the vector sits in the support $S$. Difference sits in the cone means that most of the estimation error sits in the support $S$.}
% \end{lemma}
\begin{proof}
	Recall that 
 $$\hat{\beta}_t \in \argmin_{\beta \in \R^p} \{ L_t(\beta;\hat \beta_{t-1}) + \lambda_t \| \beta\|_1 \},
 $$ by using the convexity of   $L_t(\beta;\hat \beta_{t-1})$ in $\beta$, we have 
	\begin{equation*}
	(\hat{\beta}_t - \beta^\ast)^\top  \nabla L_t(\beta^\ast;\hat \beta_{t-1}) \leq  	 L_t(\hat \beta_t;\hat \beta_{t-1}) - L_t(\beta^*;\hat \beta_{t-1}) \leq  \lambda_t(\Vert \beta^\ast\Vert_1 - \Vert \hat{\beta}_t\Vert_1),
	\end{equation*}
which further yields that 
	\begin{equation}\label{eq:le01}
	-\Vert \hat{\beta}_t - \beta^\ast\Vert_1 \Vert \nabla L_t(\beta^\ast;\hat \beta_{t-1}) \Vert_\infty \leq \lambda_t(\Vert \beta^\ast\Vert_1 - \Vert \hat{\beta}_t\Vert_1).
	\end{equation}
	Because $S$ is the support of $\beta^*$, using the inequalities
	\begin{equation*}
	\| \beta_{S^c}\|_1 = \| \beta - \beta^*\|_1  - \| (\beta - \beta^*)_S\|_1
	\end{equation*}
	and 
	\begin{equation*}
	 \| \beta^* \|_1 - \| \beta\|_1 = \| \beta^* \|_1 - \| \beta_S\|_1 - \| \beta_{S^c}\|_1 \leq \| (\beta^* - \beta )_S\|_1 - \| \beta^*_{S^c}\|_1,
  	\end{equation*}
	we obtain 
	\begin{equation*}
\Vert\beta^\ast\Vert_1 - \Vert\hat{\beta}\Vert_1 \leq 2\Vert (\hat{\beta} - \beta^\ast)_S\Vert_1 - \Vert\hat{\beta} - \beta^\ast\Vert_1.
\end{equation*}
By combining the above inequality with \eqref{eq:le01} and rearranging the terms, we conclude that 
\begin{equation*}
\begin{split}
&  \frac{2\lambda_t}{\lambda_t - \Vert \nabla L_t(\beta^\ast;\hat \beta_{t-1}) \Vert_\infty} \Vert (\hat{\beta}_t - \beta^\ast)_S\Vert_1 
\\
& \quad =  \frac{2}{1 - \Vert \nabla L_t(\beta^\ast;\hat \beta_{t-1}) \Vert_\infty/\lambda_t } \Vert (\hat{\beta}_t - \beta^\ast)_S\Vert_1 
\geq \Vert \hat{\beta}_t - \beta^\ast\Vert_1.
\end{split}
\end{equation*}

Our choice of $\lambda_t \geq 2 \Vert \nabla L_t(\beta^\ast;\hat \beta_{t-1}) \Vert_\infty $  makes the constant multiplier in the left-hand side upper bounded by 4, which proves the desired result. 
\end{proof}

% Note that this is a purely deterministic result, except that in the definition of $\nabla \tilde{\ell}_t(\beta^\ast)$ we need to know the true $\beta^\ast$. 

% \begin{prp}\label{prop:RE}
% 	Assume that the fixed design matrix satisfies the RE condition. Assume that we choose $\lambda_{t} \geq  2\Vert \nabla L_t(\beta^\ast;\hat \beta_{t-1})\Vert_\infty $ according to  Lemma \ref{lemma:cone} to ensure the cone condition is satisfied, we have
% 	\begin{equation*}
% 	\Vert \hat{\beta}_{t} - \beta^\ast\Vert_2 \leq \frac{3\sqrt{s}}{\kappa} \lambda_{t}, \quad \Vert \hat{\beta}_{t} - \beta^\ast\Vert_1 \leq \frac{12s}{\kappa} \lambda_{t}.
% 	\end{equation*}
% \end{prp}

\subsection{Proof of Lemma~\ref{lm:bound_2}} \label{sec:proof_of_lemma_bound2} 

\begin{proof}
 Setting $\lambda_t \geq 2 \|  \nabla L_t(\beta^\ast;\hat \beta_{t-1}) \|_\infty$, we have  $\hat \beta_t \in \cC_S$, which together with the RSC Assumption~\ref{ass:RSC} implies that 
	\begin{equation*}
	\begin{split}
	\lambda_t ( \| \beta^*\|_1 - \| \hat \beta_{t}\|_1) & \geq \langle  \nabla L_t(\beta^\ast;\hat \beta_{t-1}) , \hat \beta_t - \beta^* \rangle + \kappa \| \hat \beta_t - \beta^*\|_2^2 \\
	& \geq - \|  \nabla L_t(\beta^\ast;\hat \beta_{t-1}) \| _\infty \| \hat \beta_t - \beta^*\|_1 + \kappa \| \hat \beta_t - \beta^*\|_2^2.
	\end{split}
	\end{equation*}
	By using the fact that $ \| \hat \beta_t - \beta^*\|_1 \geq \| \beta^*\|_1 - \| \hat \beta_{t}\|_1 $ and rearranging the above terms, we further obtain 
	\begin{equation*}
	\| \hat \beta_t - \beta^*\|_2^2  \leq \frac{ \lambda_t  + \| \ \nabla L_t(\beta^\ast;\hat \beta_{t-1}) \| _\infty  }{\kappa}\| \hat \beta_t - \beta^*\|_1.
	\end{equation*}
Because $\hat \beta_t \in \cC_S = \Big \{ \beta \in \R^p: \|(\beta - \beta^*)_{S^c} \|_1 \leq  3 \| (\beta - \beta^*)_S\|_1 \Big \}$, we obtain 
	\begin{equation*}
	\| \hat \beta_t - \beta^* \|_1 \leq 4  \| ( \hat \beta_t - \beta^* )_S\|_1 \leq 4 \sqrt{s}   \| ( \hat \beta_t - \beta^* )_S\|_2,
	\end{equation*}
 which further leads to 
 \begin{equation*}
   \| \hat \beta_t - \beta^* \|_1^2 \leq  16 s  \| ( \hat \beta_t - \beta^* )_S\|_2^2     \leq \frac{ 16s }{\kappa} \Big ( \lambda_t  + \|  \nabla L_t(\beta^\ast;\hat \beta_{t-1}) \| _\infty \Big ) \| \hat \beta_t - \beta^* \|_1. 
 \end{equation*}
We obtain the desired result by dividing both sides by $ \| \hat \beta_t - \beta^* \|_1$ and setting $\lambda_t \geq  2\Vert  \nabla L_t(\beta^\ast;\hat \beta_{t-1}) \Vert_\infty $. 
\end{proof}

\subsection{Proofs of Lemma~\ref{lemma:martingale}, Proposition~\ref{prop:simu_martingale}, and Proposition \ref{prp:bound_3}} \label{sec:proof_of_lemma_martingale}
\begin{proof}[Proof of Lemma~\ref{lemma:martingale}]
On one hand, the Doob's  inequality implies
\begin{equation*}
\mathbb{P}(\max_{1 \leq i \leq n} S_i \geq t) = \mathbb{P}(\max_{1 \leq i \leq n} e^{hS_i} \geq e^{ht}) \leq \mathbb{E}[e^{hS_n - ht}] = e^{-ht}\prod^n_{i = 1}\mathbb{E}[e^{hX_i}],
\end{equation*}
because $e^{hS_i}$ is a sub-martingale. On the other hand, since all the $X_i$'s are sub-Gaussian, we have
\begin{equation*}
\mathbb{E}[e^{hX_i}] \leq e^{h^2\sigma^2_i / 2}. 
\end{equation*}
Then we obtain
$$\mathbb{P}(\max_{1 \leq i \leq n} S_i \geq t) \leq e^{\frac{h^2}{2}\sum\limits_{i=1}^n\sigma_i^2- ht}.$$
We obtain the desired result by setting $h = ( \sum\limits_{i=1}^n\sigma_i^2)^{-1}t$ in the above inequality. 
\end{proof}

%%%%%%%%%%%%%%%%
%%%%%%%%%%%%%%%%5
\begin{proof}[Proof of Proposition \ref{prop:simu_martingale} ]

	Obviously we have $\nabla \ell_k(\beta^\ast) = 2  x_k \epsilon_k$, which means 
	\begin{equation*}
	S_j = \sum^j_{k=1} w_k \nabla \ell_k(\beta^\ast) = 2\sum^j_{k=1} w_k  x_k \epsilon_k
	\end{equation*}
	is a martingale (for each component). Clearly,  the sub-Gaussian norm of each component of $w_k x_k \epsilon_k$ is $w_k M_t \sigma_\epsilon$, where $M_t = \max_{i \leq t, j \leq p} |x_{ij}|$ and $\sigma_\epsilon$ represents the sub-Gaussian norm of $\epsilon_k$. If we apply  Lemma~\ref{lemma:martingale} and use  a union bound, we obtain that for all $ a >0$, 
	\begin{equation*}
	\mathbb{P}\left(\max_{1 \leq j \leq t} \Vert S_j \Vert_\infty \geq a\right) \leq p\exp\left(-\frac{a^2}{2\sigma_\epsilon^2 M_t^2 \sum^t_{j=1} w^2_j}\right).
	\end{equation*}
	This means that with probability at least $1 - \delta$, for all $j = 1, 2, \cdots, t$, we have
	\begin{equation*}
	\Vert 	S_j\Vert_\infty \leq \sigma_\epsilon M_t \sqrt{\sum^j_{k=1} w_k^2} \sqrt{\log (p / \delta)}.
	\end{equation*}
	Dividing both sides by $\sum^j_{k=1} w_k$, we obtain that
	\begin{equation*}
	\Vert 	\frac{S_j}{W_j}\Vert_\infty \leq \sigma_\epsilon M_t \frac{\sqrt{\sum^j_{k=1} w_k^2}}{\sum^j_{k=1} w_k} \sqrt{\log (p / \delta)}=\sigma_\epsilon M_t z_j \sqrt{\log (p / \delta)}.
	\end{equation*}
 Because each $x_{i,j}$ is a sub-Gaussian random variable, letting $\sigma_X$ be its corresponding sub-Gaussian norm and applying the union bound, we have that for any $a>0$,
 \begin{equation*}
     \mathbb{P} \left (M_t \geq a \right ) \leq 2 p^2 \exp\left (-\frac{a^2}{2\sigma_X^2} \right ). 
 \end{equation*}
	%Now we see that if we choose $w_j = 1/ j^{1/2 + \epsilon} $, $\sqrt{\sum^j_{k=1} w_k^2}$ is upper bounded by a universal constant, while $\sum^j_{k=1} w_k$ can be approximately replaced by $j^{1/2 - \epsilon}$. 
 Combine the above bounds, we have with probability at least $1-2\delta$, there exists a constant $c_1>0$ such that
	\begin{equation*}
	\Vert 	\frac{S_j}{W_j}\Vert_\infty \leq   c_1 z_t \sqrt{\log (p / \delta)}\sqrt{\log (pt / \delta)}  .
	\end{equation*}
 This proves the first inequality. For the second and third inequalities, note that with probability at least $1-2\delta$, there exists a constant $c_2>0$ satisfying that:  
	\begin{equation*}
 \begin{split}
 \Vert \hat{\Lambda}_j - \Lambda \Vert_{\infty} & = \Vert \sum^j_{k=1} \frac{w_{j, k} }{W_j}  (x_k x_k^\top  - \Lambda)\Vert_{\infty}
 \leq c_2 z_j \sqrt{\log (p^2 / \delta)},
 \\
 \text{ and }
 \Vert \hat{\Lambda}_0 - \Lambda \Vert_{\infty} 
 & 
 = \Vert \sum_{j=1}^{t_0} \frac{1 }{t_0}  (x_{0j} x_{0j}^\top  - \Lambda)\Vert_{\infty} 
 \leq c_2 t_0^{-\frac{1}{2}}\sqrt{\log (p^2 / \delta)}.
 \end{split}
	\end{equation*}
We complete the proof by setting $c  = \max \{ c_1, c_2\}$.
\end{proof}

%%%%%%%%%%%%%%%%%%
%%%%%%%%%%%%%%%%%%

\begin{proof}[Proof of Proposition \ref{prp:bound_3}]
We first write $\nabla {L}_t(\beta^\ast ; \hat \beta_{t-1}) $ as
\begin{equation*}
\begin{aligned}
\nabla {L}_t(\beta^\ast ; \hat \beta_{t-1})
&= \frac{1}{W_t}\sum^t_{j = 1} w_{t,j}( \nabla \ell_j(\hat{\beta}_{t-1}) -  \nabla \ell_j(\beta^\ast)) + \frac{1}{W_t}\sum^t_{j = 1} w_{t,j}  \nabla \ell_j(\beta^\ast) + (\nabla \ell_0(\beta^\ast) - \nabla \ell_0(\hat{\beta}_{t-1}))\\
&= \frac{1}{W_t}\sum^t_{j = 1} w_{t,j}  \nabla \ell_j(\beta^\ast) + \left(\frac{1}{W_t}\sum^t_{j=1} w_{t,j} x_j x_j - \frac{1}{t_0} \sum^{t_0}_{i=1} x_i x_i \right)(\hat{\beta}_{t-1} - \beta^\ast)\\
&= \frac{1}{W_t}\sum^t_{j = 1} w_{t,j}  \nabla \ell_j(\beta^\ast) + \left(\hat{\Lambda}_t - \hat{\Lambda}_0 \right)(\hat{\beta}_{t-1} - \beta^\ast),
\end{aligned}
\end{equation*}
which implies 
\begin{equation*}
\begin{aligned}
\| \nabla {L}_t(\beta^\ast ; \hat \beta_{t-1})  \|_\infty & \leq \Big \| \sum^t_{j = 1} \frac{w_{t,j} }{W_t}  \nabla \ell_j(\beta^\ast) \Big \|_\infty +  \Big \| \left(\hat{\Lambda}_t - \hat{\Lambda}_0 \right)(\hat{\beta}_{t-1} - \beta^\ast) \Big \|_\infty
\\
& \leq  \Big \| \sum^t_{j = 1} \frac{w_{t,j} }{W_t}  \nabla \ell_j(\beta^\ast) \Big \|_\infty +  2\Big ( \| \hat{\Lambda}_t -  \Lambda \|_\infty + \| \hat{\Lambda}_0   - \Lambda  \|_\infty   \Big ) \| \hat{\beta}_{t-1} - \beta^\ast  \|_1.
\end{aligned}
\end{equation*}
	It then suffices to apply Proposition \ref{prop:simu_martingale}. We conclude that with probability at least $1-4\delta$, there exists a constant $c >0$ such that
 \begin{equation*}
\begin{aligned}
& \| \nabla {L}_t(\beta^\ast ; \hat \beta_{t-1})  \|_\infty \leq  c z_t \sqrt{\log (p / \delta)}\sqrt{\log (pt / \delta)}  +  2c   \sqrt{\log (p^2 / \delta)} \Big ( z_t + \frac{1 }{t_0^{1/2}}
\Big ) \| \hat{\beta}_{t-1} - \beta^\ast  \|_1. 
\end{aligned}
\end{equation*}
This completes the proof. 
\end{proof}

\subsection{Proof of Theorem \ref{thm:main}} 
The following lemma can be proved by induction, and we omit its proof.
\begin{lemma}
	Assume that a sequence $\Delta_t$ satisfies the following recursive relationship
	\begin{equation*}
	\Delta_t \leq b_t + a_t \Delta_{t-1},
	\end{equation*}
	then we have
	\begin{equation*}
	\Delta_t \leq \sum^t_{j = 1} \left( \prod^t_{k = j + 1} a_k\right) b_j  +  \left(\prod^t_{j = 1} a_j\right) \Delta_0.
	\end{equation*}
\end{lemma}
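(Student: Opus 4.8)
The plan is to prove the claim by induction on $t$, taking the stated recursion $\Delta_t \leq b_t + a_t \Delta_{t-1}$ together with the (implicit, and in our setting valid) nonnegativity $a_t \geq 0$ as the only inputs. First I would fix the empty-product convention $\prod_{k=j+1}^{t} a_k = 1$ whenever $j = t$, which is precisely what makes the $j=t$ summand of $\sum_{j=1}^{t}(\prod_{k=j+1}^{t} a_k)\, b_j$ equal to $b_t$. For the base case $t=1$, the claimed right-hand side collapses to $\big(\prod_{k=2}^{1} a_k\big) b_1 + \big(\prod_{j=1}^{1} a_j\big)\Delta_0 = b_1 + a_1 \Delta_0$, which is exactly the recursion evaluated at $t=1$, so the bound holds.

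For the inductive step I would assume the bound at index $t-1$, namely $\Delta_{t-1} \leq \sum_{j=1}^{t-1}\big(\prod_{k=j+1}^{t-1} a_k\big) b_j + \big(\prod_{j=1}^{t-1} a_j\big)\Delta_0$, and substitute it into the recursion. Because $a_t \geq 0$, multiplying the inductive hypothesis by $a_t$ preserves the inequality direction, giving
\[
\Delta_t \leq b_t + \sum_{j=1}^{t-1} a_t\Big(\prod_{k=j+1}^{t-1} a_k\Big) b_j + a_t\Big(\prod_{j=1}^{t-1} a_j\Big)\Delta_0.
\]
The key bookkeeping step is to absorb the extra factor $a_t$ into each product, using $a_t \prod_{k=j+1}^{t-1} a_k = \prod_{k=j+1}^{t} a_k$ and $a_t \prod_{j=1}^{t-1} a_j = \prod_{j=1}^{t} a_j$. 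I would then reattach $b_t$ as the $j=t$ term via the empty-product convention, rewriting $b_t + \sum_{j=1}^{t-1}\big(\prod_{k=j+1}^{t} a_k\big) b_j = \sum_{j=1}^{t}\big(\prod_{k=j+1}^{t} a_k\big) b_j$. This produces exactly the asserted bound at index $t$ and closes the induction.

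There is no real obstacle here; the argument is an elementary induction and the products telescope automatically. The only two points requiring care are (i) the nonnegativity $a_t \geq 0$, which is what guarantees that multiplying the inductive hypothesis by $a_t$ does not flip the inequality, and which holds in our setting since $a_j = \frac{96sc}{\kappa}\sqrt{\log(p^2/\delta)}\,(c_0 j^{-1/2} + t_0^{-1/2}) > 0$; and (ii) the consistent use of the empty-product convention, which is what allows the isolated term $b_t$ to merge into the sum and makes the base case reduce cleanly. With these conventions in place, the reindexing of the products is immediate, and the proof follows.
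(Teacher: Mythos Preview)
Your proposal is correct and matches the paper's approach: the paper explicitly states that this lemma ``can be proved by induction'' and omits the details, so your induction argument is exactly what was intended. The only minor addition you make explicit is the nonnegativity $a_t \geq 0$, which the paper leaves implicit but which, as you note, clearly holds for the specific $a_j$ used downstream.
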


%%%%%%%%%%%%%%%%%%%%%
%%%%%%%%%%%%%%%%%%%%%%

\label{sec:proof_of_thm_main}
\begin{proof}[Proof of Theorem \ref{thm:main}]

{\bf (a)}
We combine Lemma \ref{lm:bound_2} and Proposition \ref{prp:bound_3} to prove the desired result. Let $\Delta_j = \|\hat \beta_j - \beta^* \|_1$ and set 
\begin{equation*}
\begin{aligned}
a_j  = \frac{ 96 s c}{\kappa} \sqrt{\log(p^2/\delta)} \left( \frac{c_0}{j^{1/2}}+  \frac{1}{t_0^{1/2}}\right) \text{ and }
b_j  = \frac{ 48sc}{\kappa}\sqrt{ \frac{ \log (pt / \delta) }{j}} \sqrt{ \log (p / \delta) }.
\end{aligned}
\end{equation*}
We need to upper bound 
\begin{equation*}
\sum^t_{j = 1} \left( \prod^t_{k = j + 1} a_k\right) b_j  +  \left(\prod^t_{j = 1} a_j\right) \Delta_0.
\end{equation*}
First note that for $j\geq t_0$,
\begin{equation*}
A_j = \prod^t_{k = j + 1} a_k \leq   \left ( \frac{ 96 s c}{\kappa} \sqrt{\log(p^2/\delta)} \left(  \frac{1+c_0}{t_0^{1/2}}\right) \right )^{t-j} .  
\end{equation*}
As a result,
\begin{equation*}
\sum^t_{j = 1} A_j b_j \leq \frac{ 48sc}{\kappa}\sqrt{  \log (pt / \delta) }\sqrt{ \log (p / \delta) } \sum^t_{j = 1} \left (  \frac{ 96 s c}{t_0^{1/2} \kappa} \sqrt{\log(p^2/\delta)}(1+c_0) \right )^{t-j} \frac{1}{\sqrt{j}} . 
\end{equation*}
Letting $q = \frac{ 96 s c}{t_0^{1/2} \kappa } \sqrt{\log(p^2/\delta)}(1+c_0)<1$ and by Lemma~\ref{lemma:series}, we have
\begin{equation*}
\sum_{j=1}^t q^{t-j} \frac{1}{\sqrt{j}} \leq \frac{C\ln t}{\sqrt{t}}
\end{equation*}
for some $C>0$. 
We hence obtain that with probability at least $1 - 4\delta$, 
\begin{equation*}
\Vert \hat{\beta}_{t} - \beta^\ast\Vert_1 \leq \prod^t_{j = 1} a_j \Delta_0+ \frac{ 48sc \ln{t}}{\kappa}\log(p / \delta)\sqrt{\frac{\log(pt / \delta)}{t}}.
\end{equation*}
Since $\prod^t_{j = 1} a_j=\prod^{t_0}_{j = 1} a_j\prod^t_{j = t_0+1} a_j\leq \delta^{t-t_0}\prod^{t_0}_{j = 1} a_j $, we can set $\prod^{t_0}_{j = 1} a_j \Delta_0 = M$, which is fixed and positive. Then $\prod^t_{j = 1} a_j \Delta_0 \leq \delta^{t-t_0}M$ geometrically decays to $0$. As a result, we conclude that with probability at least $1-4\delta$, 
\begin{equation*}
\begin{split}
\| \hat \beta_t - \beta^*\|_1  \leq \widetilde \cO \Big (  \frac{\sqrt{s}}{\kappa \sqrt{t}} \Big ).
\end{split}
\end{equation*}
\noindent
{\bf (b)}
To derive the statistical error in terms of $\ell_2$-norm, we apply  Lemma~\ref{lm:bound_2} and obtain
\begin{equation}\label{eq:2normerror}
\begin{split}
\| \hat \beta_t - \beta^*\|_2^2  \leq \frac{ \lambda_t  + \| \ \nabla L_t(\beta^\ast;\hat \beta_{t-1}) \| _\infty  }{\kappa}\| \hat \beta_t - \beta^*\|_1
 \leq \frac{ 3\| \ \nabla L_t(\beta^\ast;\hat \beta_{t-1}) \| _\infty  }{\kappa} \| \hat \beta_t - \beta^*\|_1.
\end{split}
\end{equation}
If suffices to bound the term $\| \nabla L_t(\beta^*;\hat \beta_{t-1}) \|_\infty$. To do so, we combine Lemma~\ref{lm:bound_2} and Proposition~\ref{prp:bound_3} and obtain with probability at least $1-4\delta$ that  for all $j \leq t$
\begin{equation*}
\begin{split}
\Vert \nabla {L}_j(\beta^\ast;\hat \beta_{j-1} )\Vert_\infty & \leq   c z_j \sqrt{\log (p t/ \delta)}\sqrt{\log (p / \delta)} + 2c   \sqrt{\log (p^2 / \delta)} \Big ( z_j +\frac{1 }{t_0^{1/2}}
\Big ) \| \hat{\beta}_{j-1} - \beta^\ast  \|_1\\
& \leq c z_j \sqrt{\log (p t/ \delta)}\sqrt{\log (p / \delta)} + 2c   \sqrt{\log (p^2 / \delta)} \Big ( z_j +\frac{1 }{t_0^{1/2}}
\Big )\frac{16s}{\kappa} \Vert \nabla {L}_{j-1}(\beta^\ast;\hat \beta_{j-2} )\Vert_\infty.
\end{split}
\end{equation*}
The condition $q = \frac{ 96 s c}{t_0^{1/2} \kappa } \sqrt{\log(p^2/\delta)}(1+c_0)<1$ implies $\frac{32sc}{\kappa} \sqrt{\log (p^2 / \delta)} \Big ( z_j +\frac{1 }{t_0^{1/2}}
\Big )< 1$ for $t\geq t_0$. 
By recursively applying the above process and adopting a similar analysis in part (a), we have 
\begin{equation*}
    \Vert \nabla {L}_t(\beta^\ast;\hat \beta_{t-1} )\Vert_\infty  \leq \widetilde \cO \Big ( \frac{1}{\kappa\sqrt{t}} \Big ).
\end{equation*}
By combining the above result with the $\ell_1$-norm error bound $\| \hat \beta_t - \beta^*\|_1 \leq \widetilde \cO(\frac{s}{\kappa \sqrt{t}})$ and \eqref{eq:2normerror}, we conclude that with probability at least $1-4\delta$, 
\begin{equation*}
\begin{split}
\| \hat \beta_t - \beta^*\|_2  \leq \widetilde \cO \Big (  \frac{\sqrt{s}}{\kappa\sqrt{t}} \Big ).
\end{split}
\end{equation*}
This completes the proof. 
\end{proof}

\begin{lemma}\label{lemma:series}
Let $\{z_j\}$ be a sequence satisfying $ z_j \leq \frac{1}{\sqrt{j}}$.  Then for any $q < 1$, there exists a constant $C>0$ such that 
 $$\sum_{j=1}^t q^{t-j} z_j  \leq \frac{C\ln t}{\sqrt{t}}.$$
\end{lemma}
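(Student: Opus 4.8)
The plan is to first use the hypothesis $z_j \le 1/\sqrt{j}$ to reduce the claim to an upper bound on the convolution sum $T_t := \sum_{j=1}^t q^{t-j}/\sqrt{j}$, and then to split this sum at the midpoint $j = t/2$ so as to separate the two competing effects: the polynomial factor $1/\sqrt{j}$ is small only when $j$ is close to $t$, whereas the geometric weight $q^{t-j}$ is small only when $j$ is close to $1$. Throughout I may assume $0 \le q < 1$, since in the application of this lemma $q$ is a product of positive quantities that is strictly less than $1$.

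For the upper block $j \ge t/2$ I would bound $1/\sqrt{j} \le \sqrt{2/t}$ and extend the geometric sum to infinity, writing $\sum_{j\le t} q^{t-j} \le \sum_{k\ge 0} q^k = 1/(1-q)$; this yields a contribution of at most $\sqrt{2}\,/\big((1-q)\sqrt{t}\big)$. For the lower block $j < t/2$ I would instead bound $1/\sqrt{j} \le 1$ and $q^{t-j} \le q^{t/2}$, and use that there are fewer than $t/2$ such terms, giving a contribution of at most $(t/2)\,q^{t/2}$. Since $q<1$, this decays exponentially; in particular $t^{3/2} q^{t/2} \to 0$, so there is a constant $C'$ with $(t/2)\,q^{t/2} \le C'/\sqrt{t}$ for all $t \ge 1$.

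Combining the two blocks gives $T_t \le C/\sqrt{t}$ for a constant $C = C(q)$, and since $\ln t \ge 1$ for $t$ large enough, we conclude $\sum_{j=1}^t q^{t-j} z_j \le T_t \le C/\sqrt{t} \le (C\ln t)/\sqrt{t}$, as claimed. The only genuinely quantitative step is controlling the lower block, i.e. verifying $t^{3/2} q^{t/2} \to 0$, which is the routine ``exponential beats polynomial'' fact and poses no real obstacle; a short induction on the recursion $T_t = q\,T_{t-1} + 1/\sqrt{t}$ would give the same conclusion if one prefers. I would also remark that the $\ln t$ appearing in the statement is merely a safe over-estimate: the argument in fact produces the sharper rate $\cO(1/\sqrt{t})$, so no delicate analysis is required.
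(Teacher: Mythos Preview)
Your proof is correct and considerably more direct than the paper's. The paper first bounds the sum by an integral, showing
\[
\sum_{j=1}^t \frac{q^{t-j}}{\sqrt{j}} \;\le\; \frac{q^{t-1}}{-2\ln q} \;+\; q^t\!\int_1^{t+1}\!\frac{dx}{\sqrt{x}\,q^{x}}
\]
via a monotonicity argument on $j\mapsto q^{-j}/\sqrt{j}$, and then estimates that integral by integration by parts followed by a further split of the domain at $(t+1)/4$, before assembling the pieces into the $C\ln t/\sqrt{t}$ bound. Your single split at $j=t/2$---bounding $1/\sqrt{j}$ by $\sqrt{2/t}$ on the upper block and $q^{t-j}$ by $q^{t/2}$ on the lower block---reaches the same conclusion in one stroke and, as you observe, actually yields the sharper rate $\cO(1/\sqrt{t})$; the $\ln t$ in the statement is slack. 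The integral-comparison route offers no advantage here, and your argument is the cleaner one.
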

\begin{proof}[Proof of Lemma \ref{lemma:series}] Our proof consists of two steps. In Step 1, we show that
$$\sum_{j=1}^t q^{t-j} z_j \leq \sum\limits_{j=1}^t\frac{q^{t-j}}{\sqrt{j}} \leq \frac{q^{t-1}}{-2\ln q}+q^t\int^{t+1}_1\frac{1}{\sqrt{x}q^x}dx.$$

Let $f(j)=\ln(\frac{1}{q^j\sqrt{j}})$, whose derivative is $f'(j)=-\ln q -\frac{1}{2j}$. A zero of $f'(j)$ is $\hat{j}=-\frac{1}{2\ln q}$, which indicates that $f(j)$ is  monotonically decreasing for $j \leq \hat{j}$ and monotonically increasing for $j \geq \hat{j}$. Therefore, we have
\begin{equation*}
\begin{aligned}
\sum_{j=1}^t q^{t-j} z_j  & \leq q^t\sum\limits^{\lfloor\hat{j}\rfloor}_{j=1}\frac{1}{q^j\sqrt{j}}+q^t\int^{t+1}_{\lceil\hat{j}\rceil}\frac{1}{\sqrt{x}q^x}dx 
\\
& \leq q^t\frac{\lfloor\hat{j}\rfloor}{q}+q^t\int^{t+1}_{\lceil\hat{j}\rceil}\frac{1}{\sqrt{x}q^x}dx \leq \frac{-\frac{1}{2\ln q}}{ q^{1-t}}+ q^t\int^{t+1}_{1}\frac{1}{\sqrt{x} q^x}dx,
\end{aligned}
\end{equation*}
which completes Step 1. 

In Step 2, we show 
$$ q^t\int^{t+1}_{1}\frac{1}{\sqrt{x} q^x}dx \leq \frac{2}{ q\ln(\frac{1}{ q})\sqrt{t+1}}+\frac{ q^{t-1}}{\ln q}+ \frac{ q^{\frac{3t}{4}}}{\ln(\frac{1}{ q}) q^{\frac{1}{4}}}.$$
We first use integration by parts and obtain
$$ q^t\int^{t+1}_{1}\frac{1}{\sqrt{x} q^x}dx \leq \frac{1}{- q\ln q\sqrt{t+1}}+\frac{ q^{t-1}}{\ln q}+ \frac{ q^t}{2\ln(\frac{1}{ q})}\int^{t+1}_{1}\frac{1}{\sqrt{x}^3 q^x}dx,
$$ 
where the last term in the right-hand side of the above inequality can be splitted into two parts
$$\frac{ q^t}{2\ln(\frac{1}{ q})}\int^{t+1}_{1}\frac{1}{\sqrt{x}^3 q^x}dx = \frac{ q^t}{2\ln(\frac{1}{ q})}\int^{t+1}_{\frac{t+1}{4}}\frac{1}{\sqrt{x}^3 q^x}dx+\frac{ q^t}{2\ln(\frac{1}{ q})}\int^{\frac{t+1}{4}}_{1}\frac{1}{\sqrt{x}^3 q^x}dx.$$
For the first part, we have 
$$\frac{ q^t}{2\ln(\frac{1}{ q})}\int^{t+1}_{\frac{t+1}{4}}\frac{1}{\sqrt{x}^3 q^x}dx \leq \frac{1}{2 q\ln(\frac{1}{ q})}\int^{t+1}_{\frac{t+1}{4}}\frac{1}{\sqrt{x}^3}dx=\frac{1}{ q\ln(\frac{1}{ q})}\frac{1}{\sqrt{t+1}}.$$
For the second part, we have
$$\frac{ q^t}{2\ln(\frac{1}{ q})}\int^{\frac{t+1}{4}}_{1}\frac{1}{\sqrt{x}^3 q^x}dx \leq \frac{ q^{\frac{3t-1}{4}}}{2\ln(\frac{1}{ q})}\int^{\frac{t+1}{4}}_{1}\frac{1}{\sqrt{x}^3}dx=\frac{ q^{\frac{3t-1}{4}}}{\ln(\frac{1}{ q})}(1-\frac{2}{\sqrt{t+1}})\leq \frac{ q^{\frac{3t-1}{4}}}{\ln(\frac{1}{ q})}=\frac{ q^{\frac{3t}{4}}}{\ln(\frac{1}{ q}) q^{\frac{1}{4}}}.$$
Adding the above bounds together finishes the proof in Step 2. 

Combining Steps 1 and 2 acquires 
$$\sum_{j=1}^t  q^{t-j} z_j \leq -\frac{ q^{t-1}}{2\ln\frac{1}{ q}} + \frac{2}{ q\ln(\frac{1}{ q})\sqrt{t+1}}+ \frac{ q^{\frac{3t}{4}}}{\ln(\frac{1}{ q}) q^{\frac{1}{4}}} \leq \frac{2}{ q\ln(\frac{1}{ q})\sqrt{t+1}}+ \frac{ q^{\frac{3t}{4}}}{\ln(\frac{1}{ q}) q^{\frac{1}{4}}}.$$
It is clear that $\frac{2}{ q\ln(\frac{1}{ q})\sqrt{t+1}} \leq C_1\frac{\ln t}{\sqrt{t}} $ for some $C_1>0$. Note that $ q^{\frac{3t}{4}}>0$, thus $g(t)=\frac{\ln(\frac{1}{ q}) q^{\frac{1}{4}}\ln(t)}{\sqrt{t} q^{\frac{3t}{4}}}>0$ holds for all $t\geq 2$. Moreover, because $\lim_{t\rightarrow\infty}g(t)=\infty $, $g(t)$ can reach its minimum point when $t=t^*\in\{2,3,\cdots\}$. By setting $C_2=\frac{1}{g(t^*)}$,  we have $C_2g(t)=C_2\frac{\ln(t)}{\sqrt{t}}/\frac{ q^{\frac{3t}{4}}}{\ln(\frac{1}{ q}) q^{\frac{1}{4}}}\geq1$, further leading to  $\frac{ q^{\frac{3t}{4}}}{\ln(\frac{1}{ q}) q^{\frac{1}{4}}}\leq C_2\frac{\ln(t)}{\sqrt{t}}$. Letting $C=C_1+C_2$, we conclude 
$$\sum_{j=1}^t  q^{t-j} z_j  \leq \frac{C\ln t}{\sqrt{t}},$$ which completes the proof. 
\end{proof}

\section{Additional Numerical Results}
\label{sec:more_numerics}

In this section, we conduct additional numerical experiments for instances where (i) the covariate $x_i$ is generated under different correlation designs, (ii) the underlying $\beta^*$ preserves difference sparsity levels, and (iii) the underlying $\beta^*$ is generated with strong signal designs. Specifically, we consider the following three problem setups:
\begin{enumerate}
    \item[(a)] The covariate $x_i$ is generated under different correlation setups, namely independent correlation, Toeplitz  $\rho =0.3$ correlation, and $\rho=0.7$ correlation. The underlying $\beta^*$ is generated under the weak signal setup as  in Section~\ref{sec:experiments}. 
    \item[(b)] The underlying $\beta^*$ has different sparsity levels $s = \|\beta^*\|_0 \in\{10,50 \}$ and each nonzero entry of $\beta^*$ is independently generated such that $\beta_j^* \sim \cN(0,0.25)$ for all $j \in \{1,2,\cdots, s\}$. Each entry of the covariate $x_i$ is independently generated such that  $x_{ij} \sim \cN(0,1)$ for all $j \in [p]$. 
    \item[(c)] The underlying $\beta^*$ has strong signals with sparsity level $s = 20$, where each nonzero entry of $\beta^*$ is independently generated such that  $\beta_j \sim \cN(0,4)$ for all $j \in S$.
\end{enumerate}
Moreover, other primitives and algorithm parameters such as step-sizes $\eta_t$  are the same as those in the experiments of Section~\ref{sec:experiments}.

\begin{figure}[t]
  \centering
  \subfigure[Independent]{
  \includegraphics[width=0.3\textwidth]{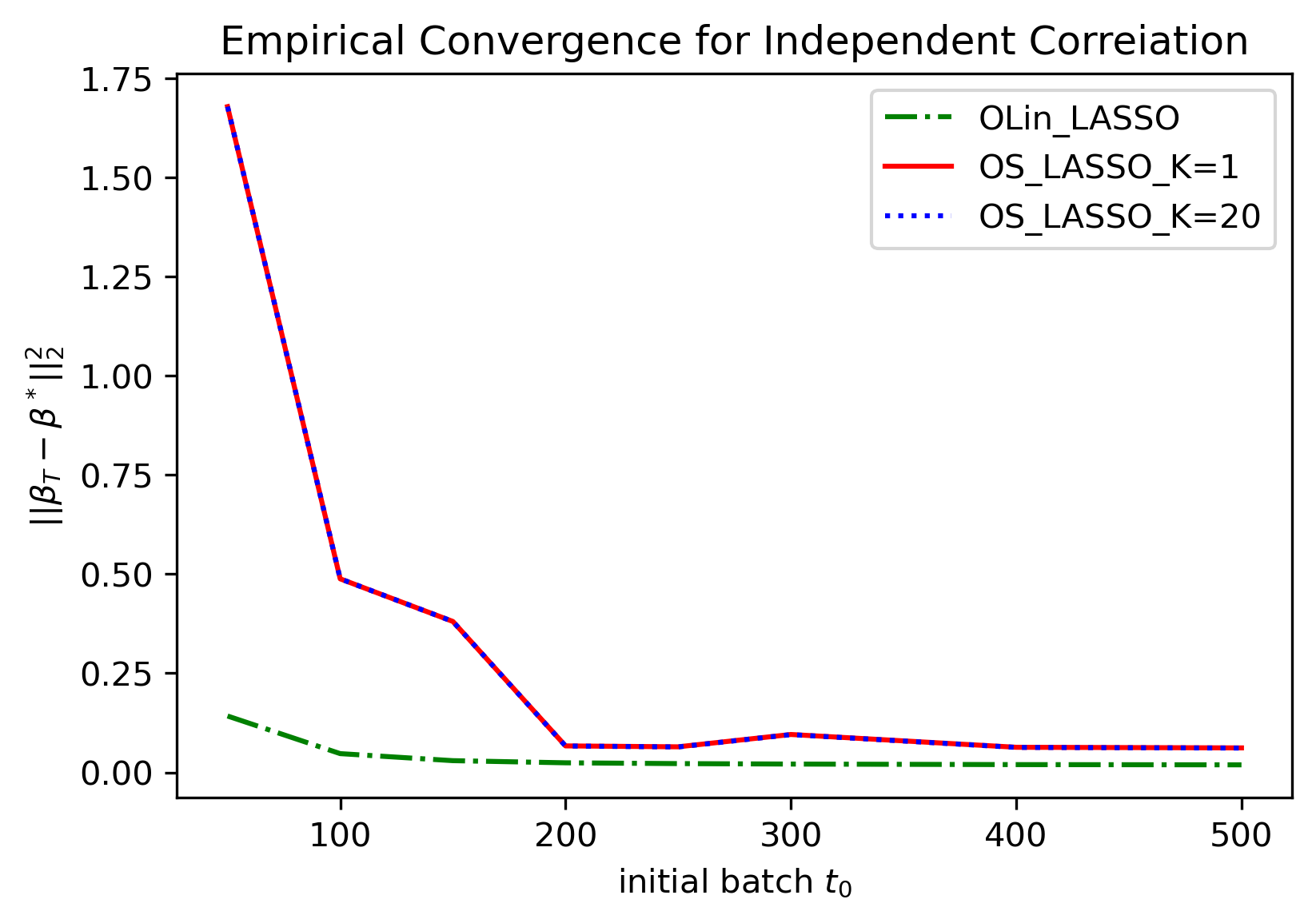}}
  \subfigure[Toeplitz $\rho =0.3$]{
  \includegraphics[width=0.3\textwidth]{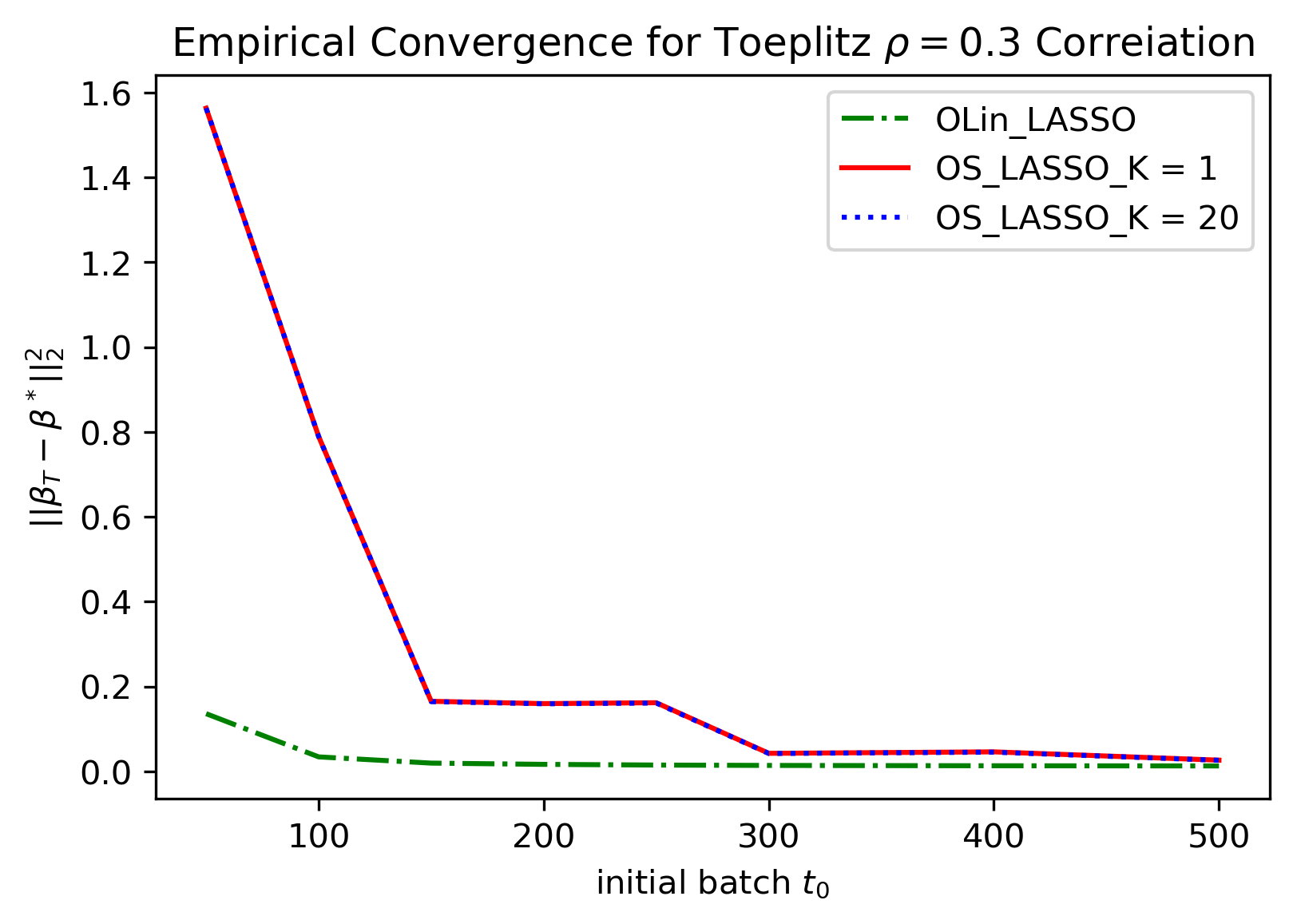}}
  \subfigure[Toeplitz $\rho =0.7$]{
  \includegraphics[width=0.3\textwidth]{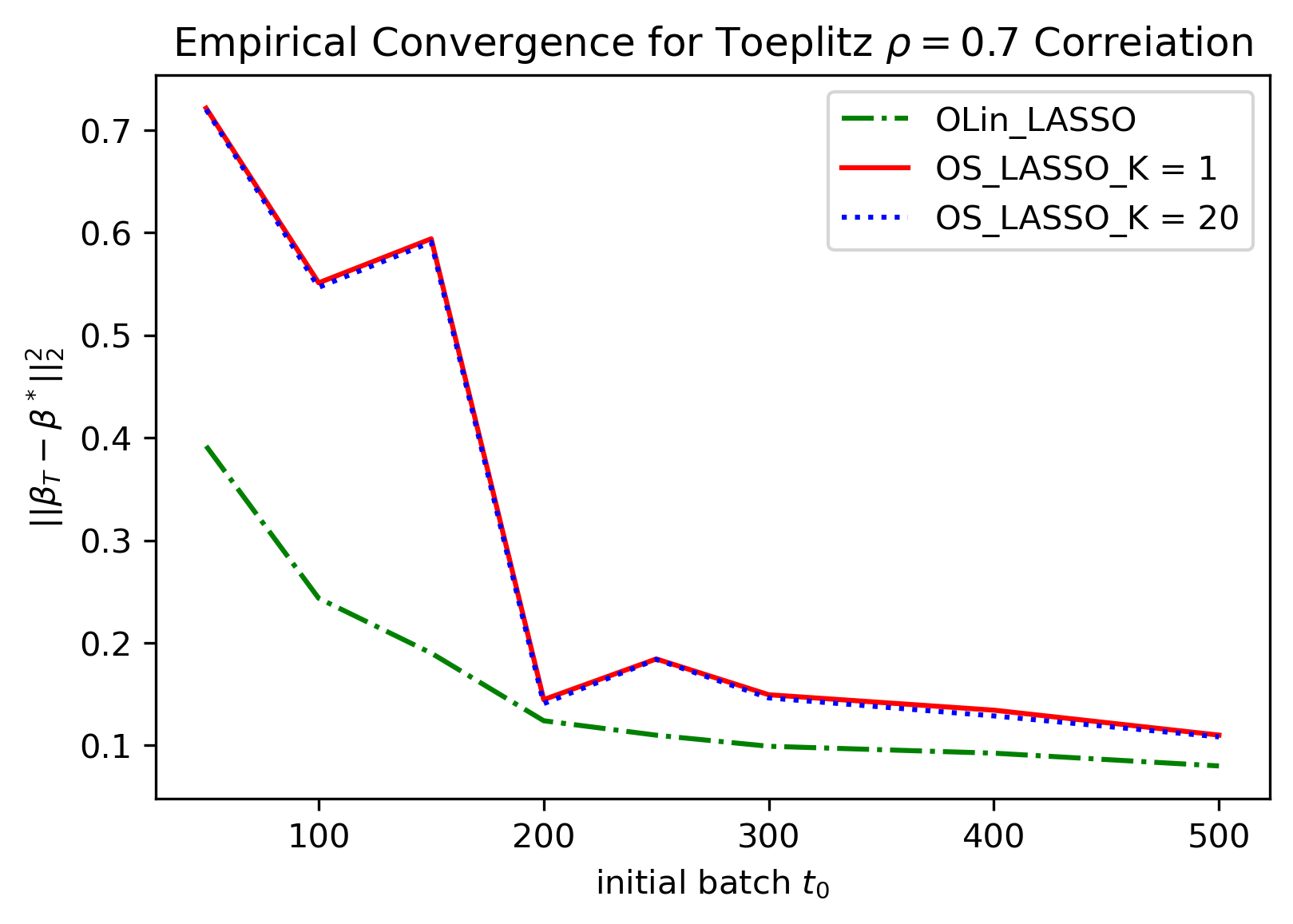}}
  \caption{Empirical Convergence of MSE$\| \beta_T - \beta^* \| ^2_2$ under weak signal setup for $t_0 =50,100,150,\cdots,500$ and $T=10000$ online learning rounds for Independent, Toeplitz $\rho =0.3$ and $\rho=0.7$  correlation designs}
  \label{fig:Toeinit}
\end{figure}

\begin{figure}[t]
  \centering
  \subfigure[Independent]{
  \includegraphics[width=0.3\textwidth]{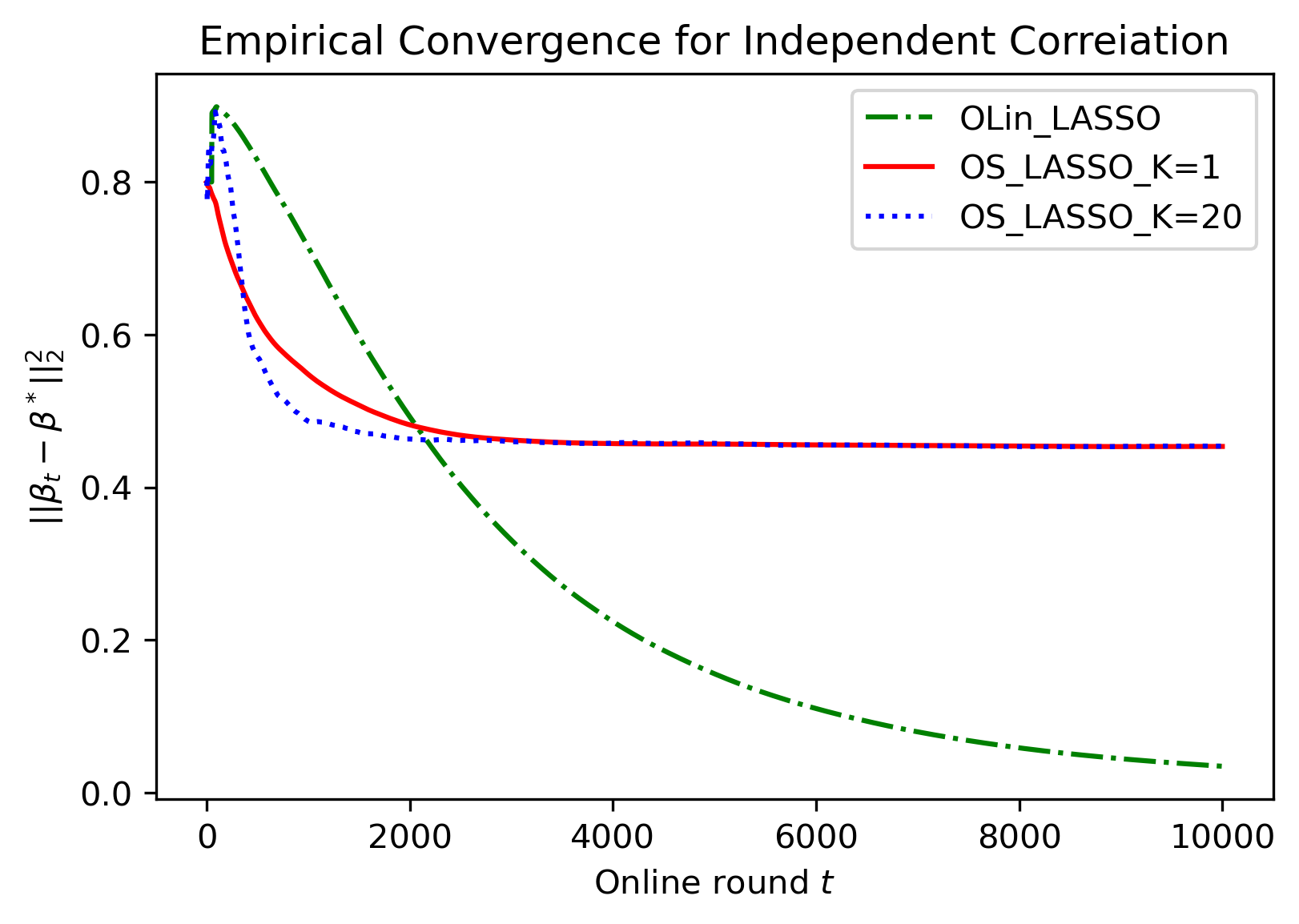}}
  \subfigure[Toeplitz $\rho =0.3$]{
  \includegraphics[width=0.3\textwidth]{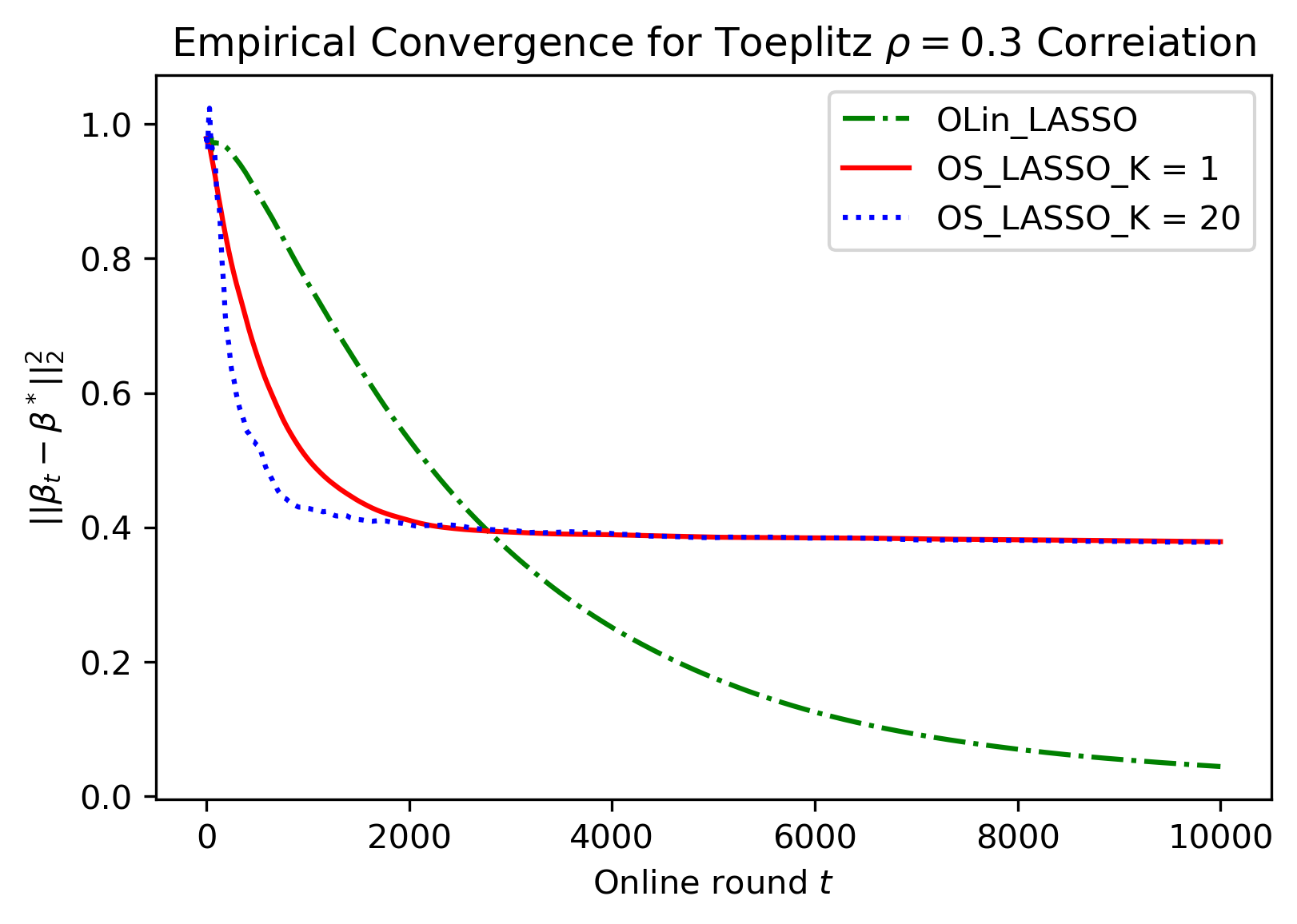}}
  \subfigure[Toeplitz $\rho =0.7$]{
  \includegraphics[width=0.3\textwidth]{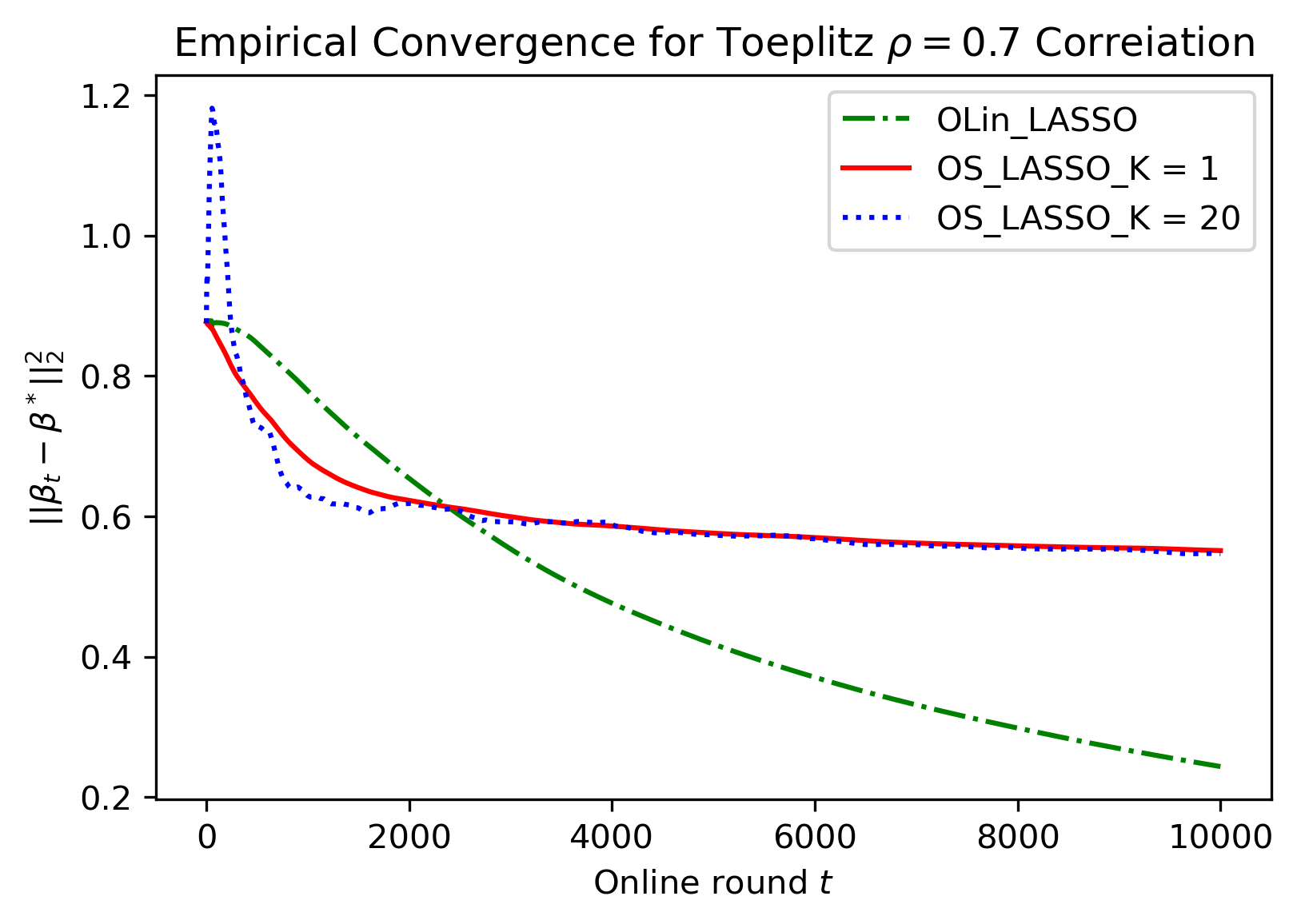}}
  \caption{Empirical Convergence of MSE$\| \beta_t - \beta^* \| ^2_2$ under weak signal setup for online learning rounds $t \in [0,10000]$ and initial batch size $t_0 = 100$  for Independent, Toeplitz $\rho =0.3$ and $\rho=0.7$ correlation designs}
  \label{fig:Toeonline}
\end{figure}
For each problem setup mentioned above, we first run our algorithm and the baselines  for $T= 10^4$ online rounds with different initial batch size $t_0 = 50,100,\cdots, 500$, and plot the empirical MSE $\| \beta_T - \beta^*\|_2^2$ against $t_0$.  We then fix $t_0 = 100$ and plot $\| \beta_t - \beta^*\|_2^2$ against the online round $t \in [0,10000]$. We report the results under setup (a) in Figures~~\ref{fig:Toeinit} and~\ref{fig:Toeonline}, report the results under setup (b) in Figures~\ref{fig:spainit} and~\ref{fig:spaonline}, and provide the results under setup (c) in Figure~\ref{fig:Strong}.

\begin{figure}[t]
  \centering
  \subfigure[sparsity: $s = 10$]{
  \includegraphics[width=0.4\textwidth]{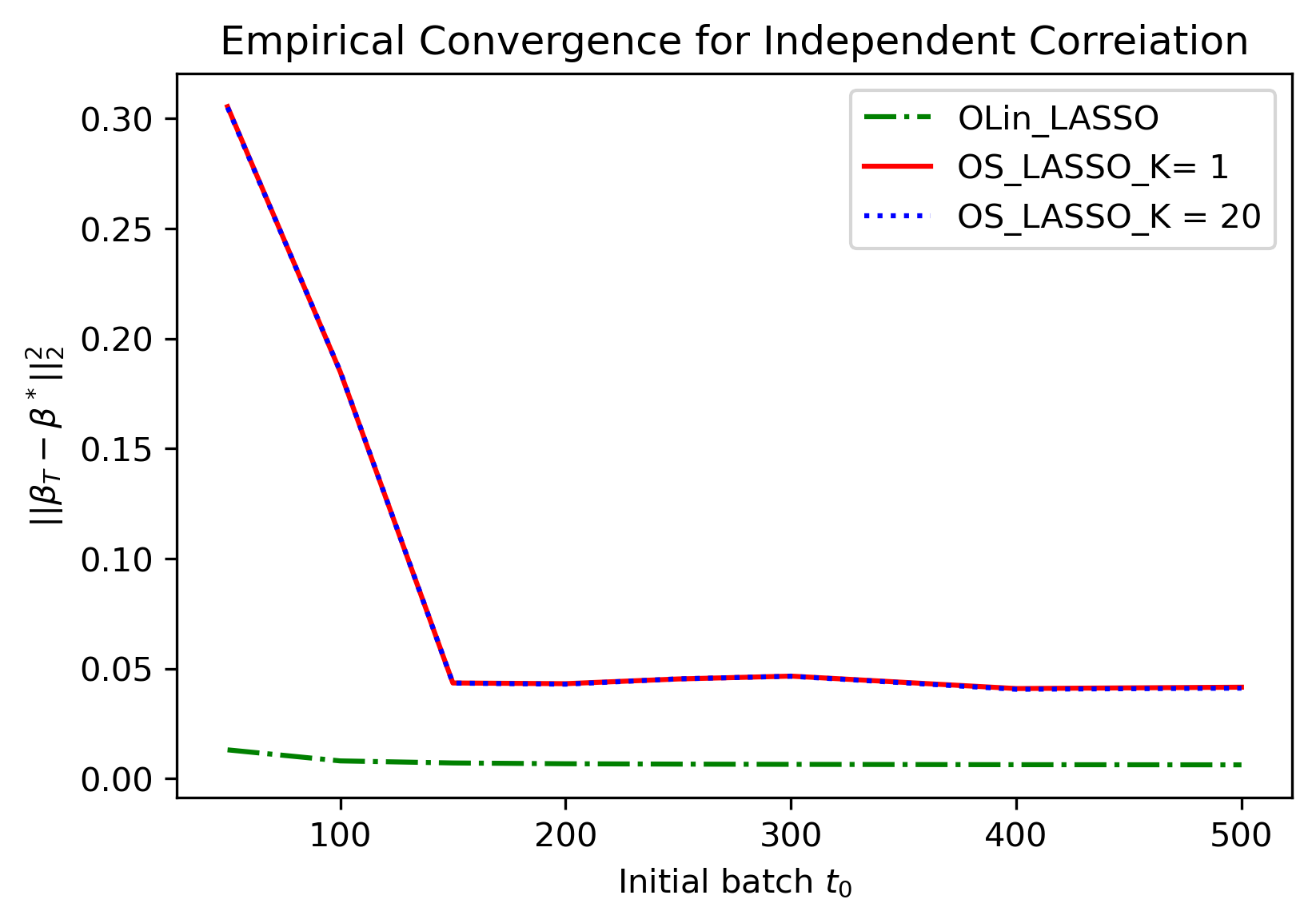}}
  \subfigure[sparsity: $s = 50$]{
  \includegraphics[width=0.4\textwidth]{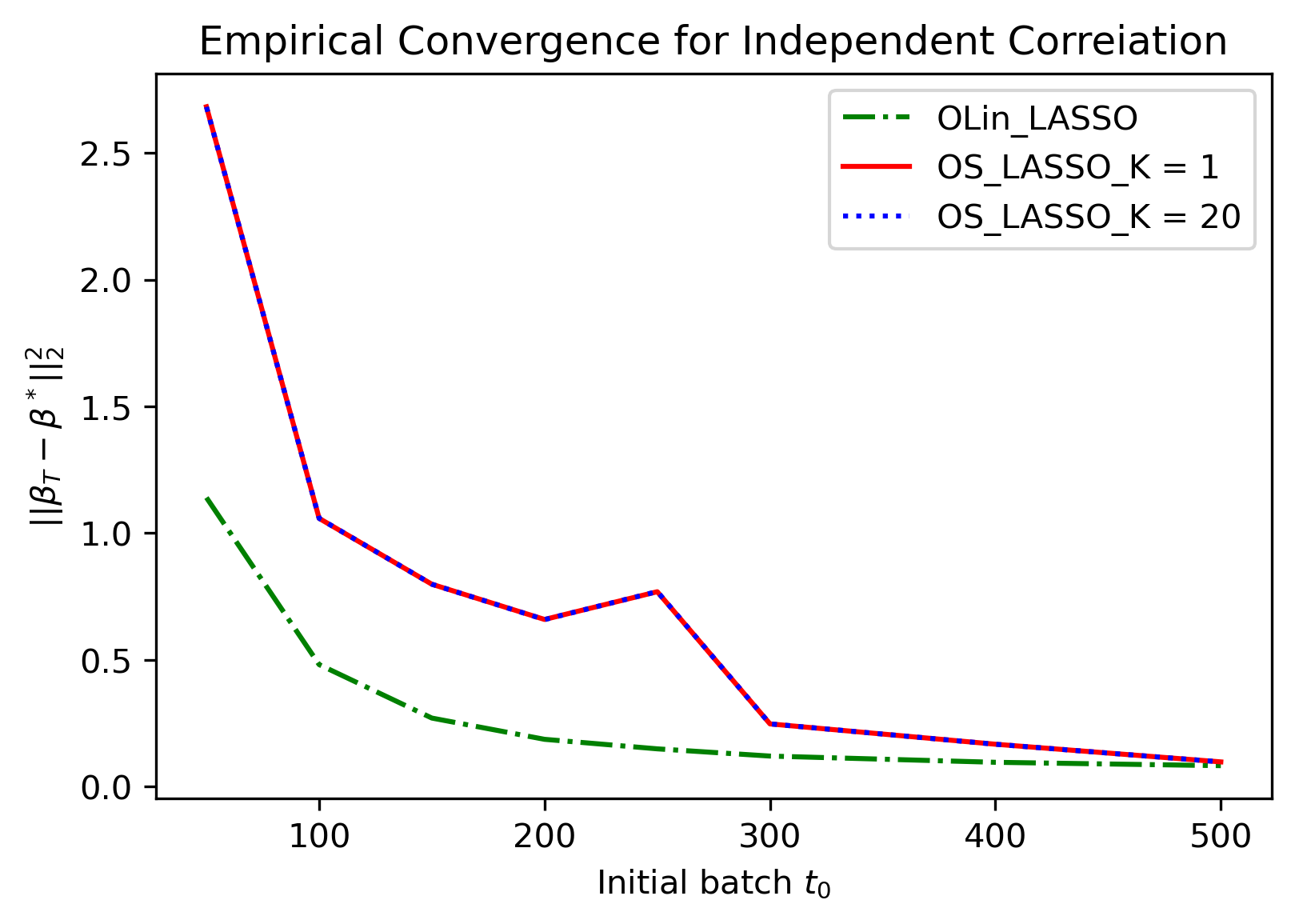}}
  \caption{Empirical Convergence of MSE $\| \beta_T - \beta^* \| ^2_2$ under weak signal setup for $t_0 =50,100,150,\cdots,500$ and $T=10000$ online learning rounds for sparsity level $s = 10$ and $s = 50$}
  \label{fig:spainit}
\end{figure}

\begin{figure}[t]
  \centering
  \subfigure[sparsity: $s = 10$]{
  \includegraphics[width=0.4\textwidth]{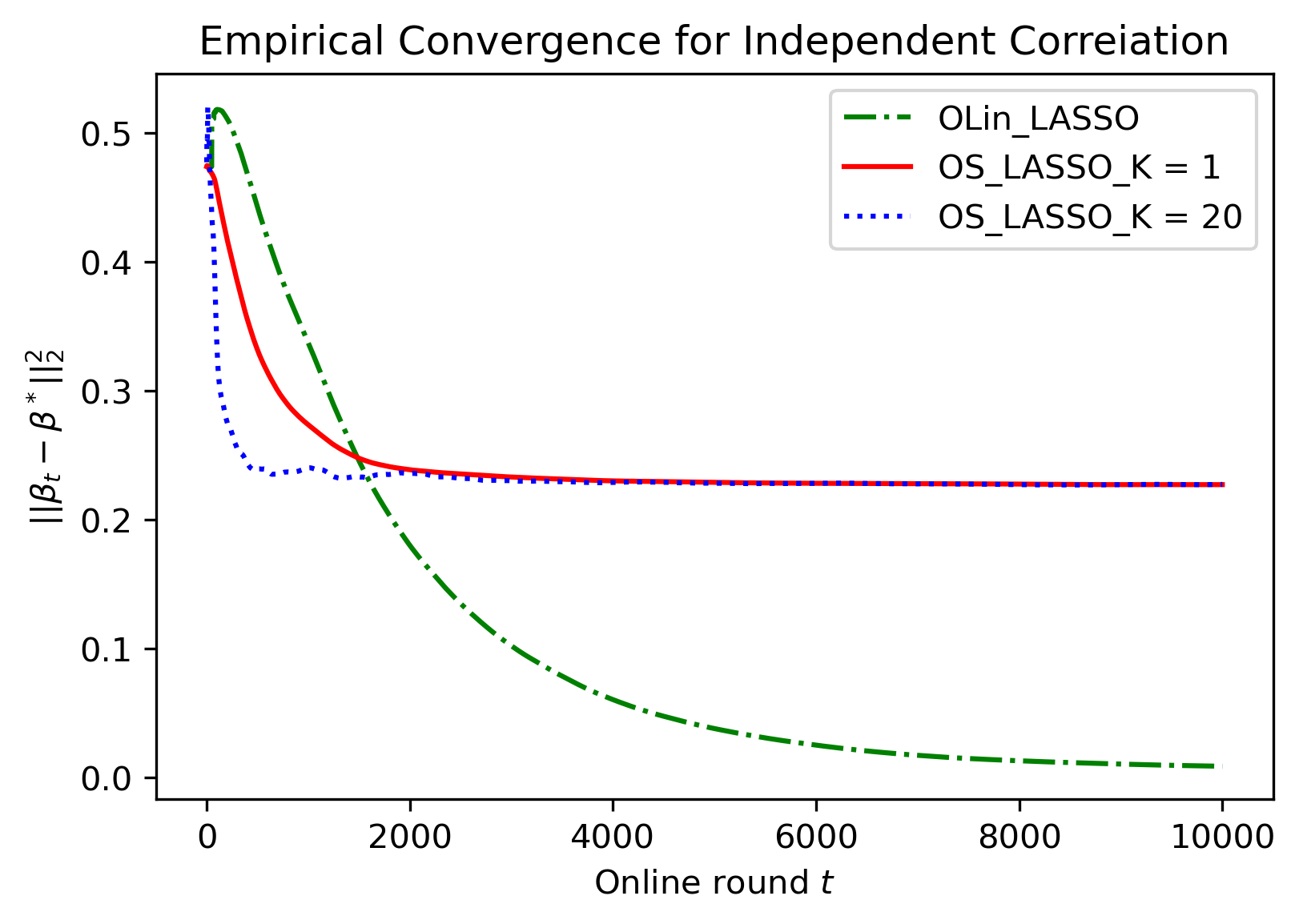}}
  \subfigure[sparsity: $s = 50$]{
  \includegraphics[width=0.4\textwidth]{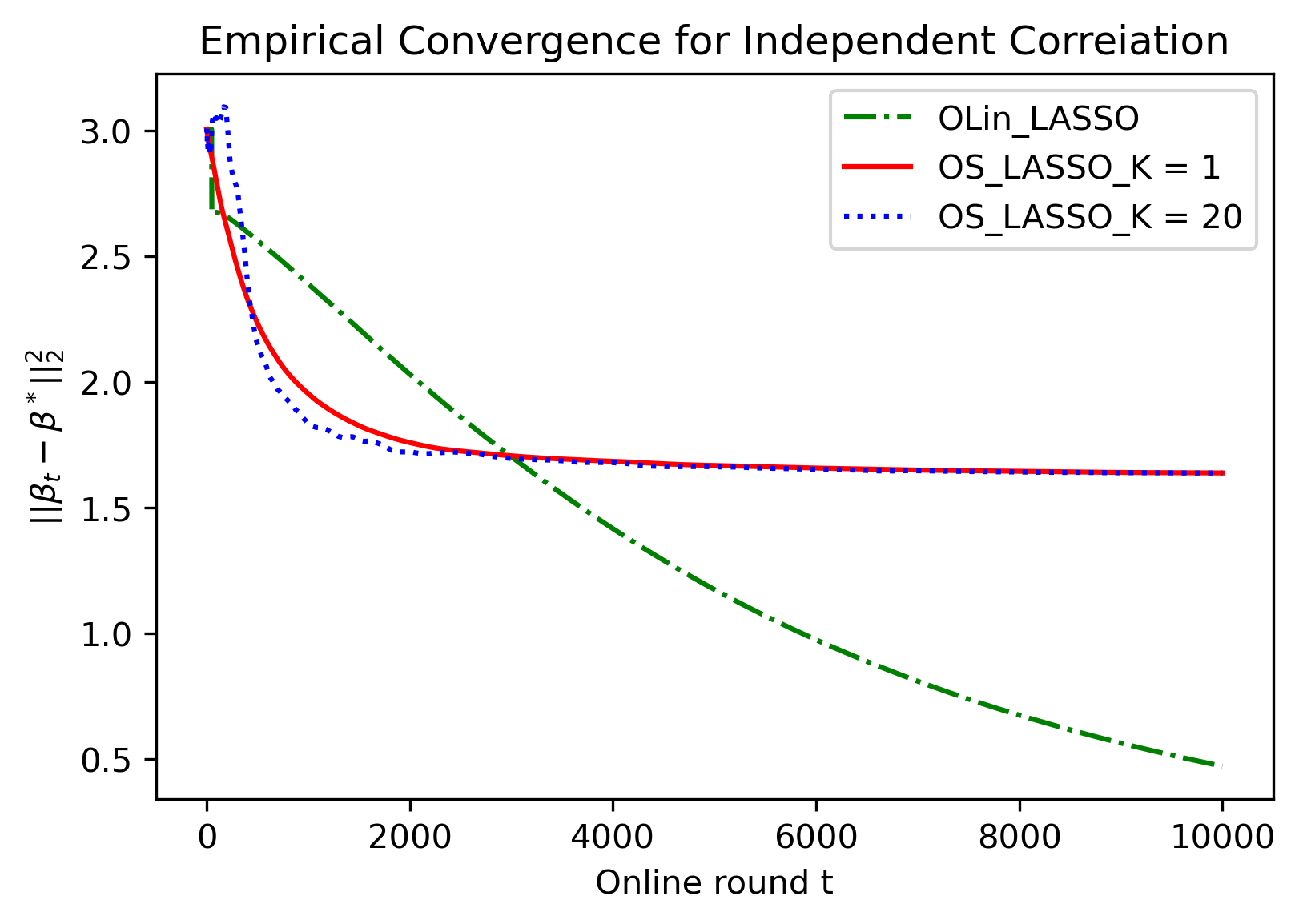}}
  \caption{Empirical Convergence of MSE $\| \beta_t - \beta^* \| ^2_2$ under weak signal setup for online learning rounds $t \in [0,10000]$ and initial batch size $t_0 = 100$  for sparsity level $s = 10$ and $s = 50$}
  \label{fig:spaonline}
\end{figure}

\begin{figure}[H]
  \centering
   \subfigure[online learning rounds $T = 10^4$]{
  \includegraphics[width=0.4\textwidth]{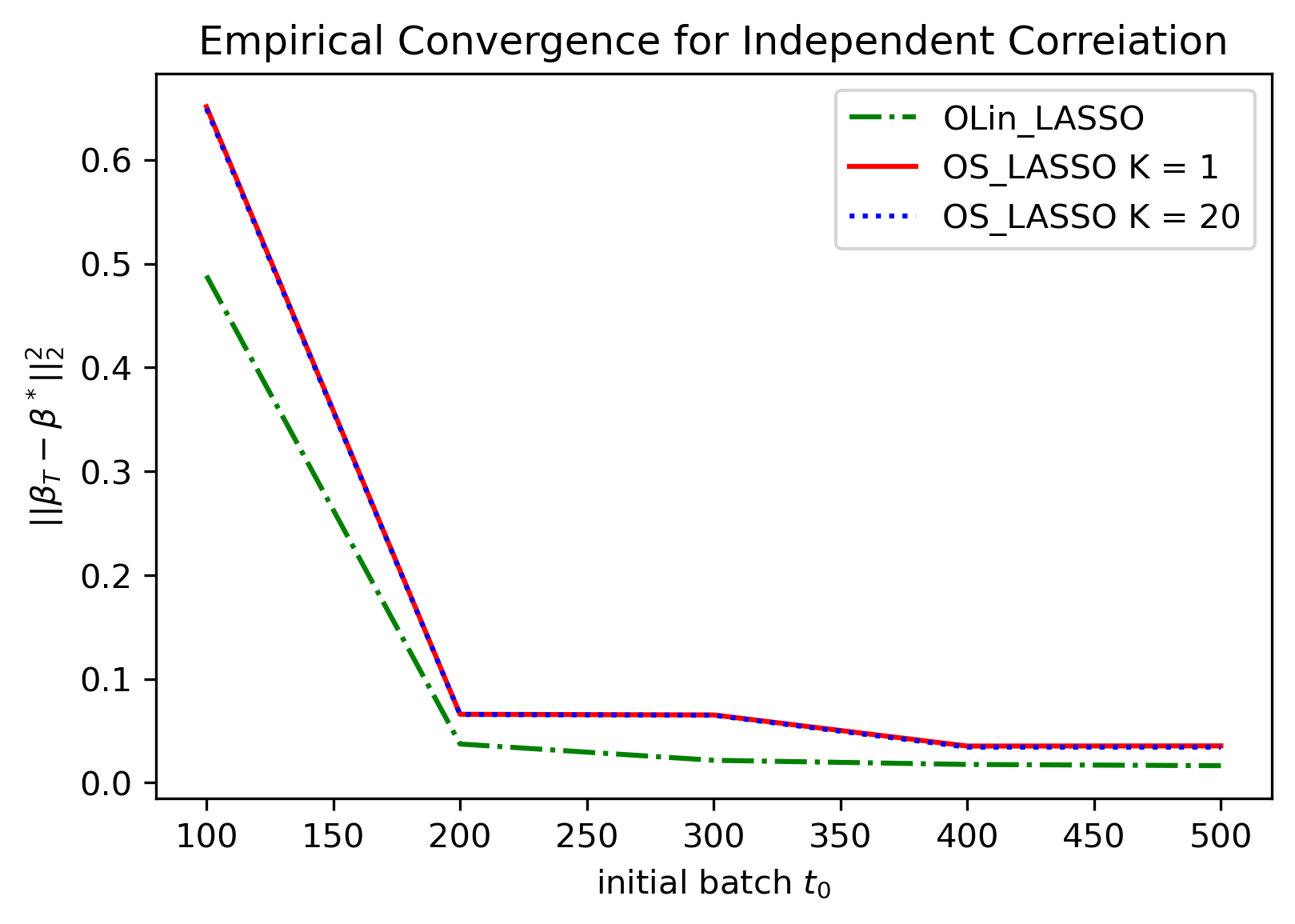}}
    \subfigure[initial batch $t_0 = 200$]{
  \includegraphics[width=0.4\textwidth]{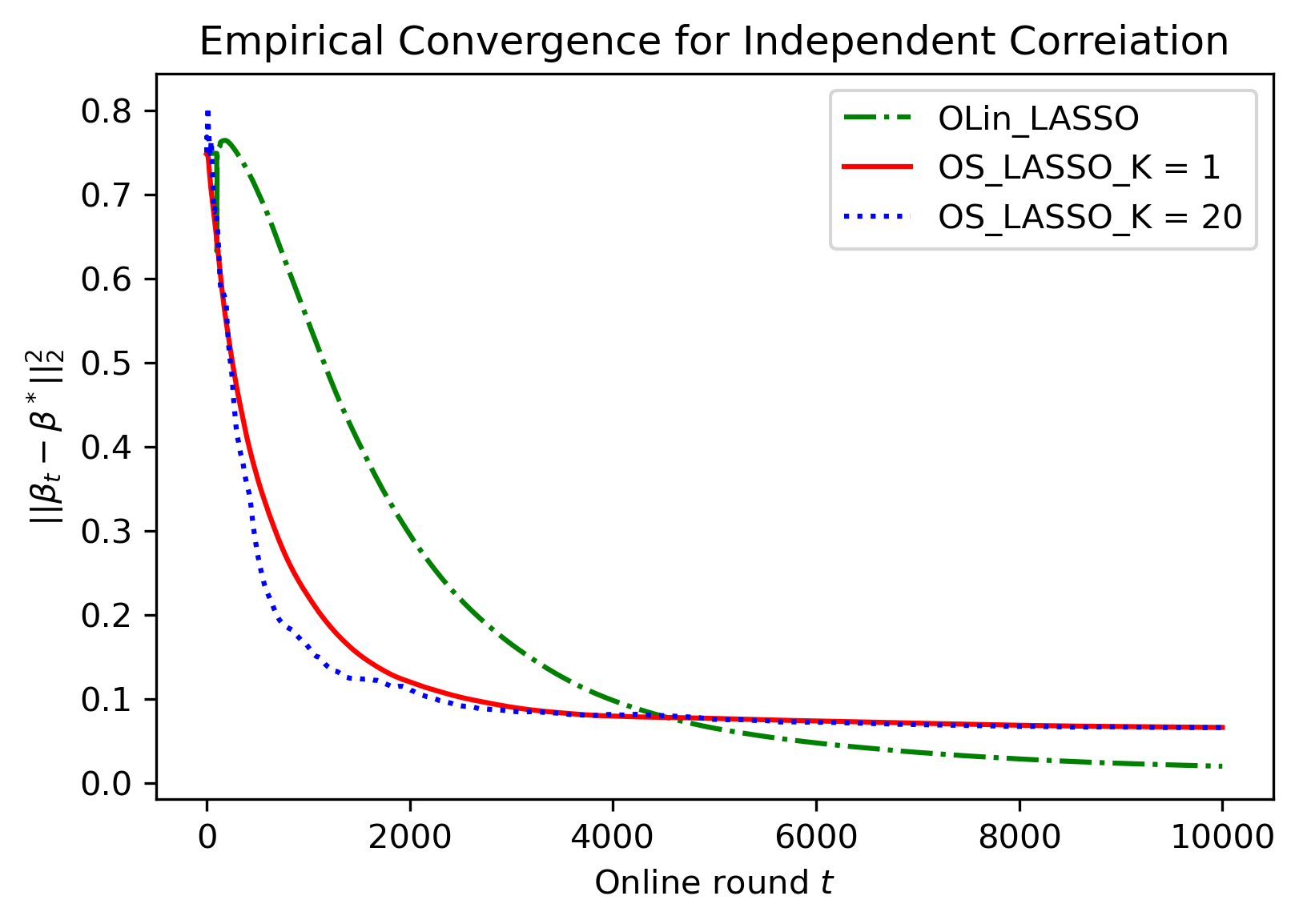}}
  \caption{Empirical convergence of MSE under the strong signal setup}
  \label{fig:Strong}
\end{figure}

Figures ~\ref{fig:Toeinit} and~\ref{fig:Toeonline} show that our algorithm outperforms the baseline algorithms  under other types of covariate correlation designs. We also observe from 
Figures~\ref{fig:spainit} and~\ref{fig:spaonline}  that our algorithm exhibits superior performance to the baseline algorithms under different sparsity levels. These observations demonstrate the practical efficiency of our algorithm under the weak signal setup. Further, for the strong signal setup (c), Figure~\ref{fig:Strong}(b) suggests that our algorithm outperforms the baselines when the initial batch size is small ($t_0 \leq 200$). However, when the initial batch size gets larger, our algorithm exhibits a comparable performance with the baselines, where the corresponding  MSEs $\| \beta_T-\beta^* \| ^2_2$ are around $0.03$ and $0.06$, respectively.  This is because under strong signals, an initial batch of size $t_0 \geq 200$ is sufficient for the baseline algorithms {\tt OS\_LASSO} with different $K$s to identify the true support of $\beta^*$ through solving an offline LASSO using the initial batch. In contrast, as shown in Figure~\ref{fig:online}, the initial batch size $t_0 = 200$ is insufficient for the baselines to identify the true support of $\beta^*$ under the weak signal setup.

\end{document}